\documentclass{article}

\PassOptionsToPackage{numbers}{natbib}
 \usepackage[preprint]{neurips_2026}

\usepackage[utf8]{inputenc} % allow utf-8 input
\usepackage[T1]{fontenc}    % use 8-bit T1 fonts
\usepackage{hyperref}       % hyperlinks
\usepackage{url}            % simple URL typesetting
\usepackage{booktabs}       % professional-quality tables
\usepackage{amsfonts}       % blackboard math symbols
\usepackage{nicefrac}       % compact symbols for 1/2, etc.
\usepackage{microtype}      % microtypography
\usepackage{xcolor}         % colors
\usepackage{amsmath}
\usepackage{amsthm}
\usepackage{graphicx}
\usepackage{multirow}
\usepackage{adjustbox}
\usepackage{tikz}
\usepackage{xparse}
\usepackage{algorithm}
\usepackage{algorithmic}
\usepackage{xspace}
\usepackage{caption}

\usepackage{array}
\newcolumntype{C}[1]{>{\centering\arraybackslash}m{#1}}
\usepackage{makecell}

\newtheorem{theorem}{Theorem}

\newcommand{\static}{\texttt{Static FP-FM}\xspace}
\newcommand{\temporal}{\texttt{Temporal FP-FM}\xspace}
\newcommand{\dynamic}{\texttt{Dynamic FP-FM}\xspace}
\newcommand{\unconditional}{\texttt{Unconditional}\xspace}
\newcommand{\conditional}{\texttt{Conditional}\xspace}
\newcommand{\finetune}{\texttt{Finetune}\xspace}
\newcommand{\dg}{\texttt{Distribution-Guided}\xspace}
\newcommand{\cg}{\texttt{Classifier-Guided}\xspace}

\newcommand{\algcomment}[1]{\textcolor{blue}{\small \it // #1}}
\NewDocumentCommand{\qualimg}{O{} O{0.08} O{south west} m}{%
\begin{tikzpicture}
        \node[inner sep=0pt] (img) {%
            \adjustimage{
                width=#2\textwidth,
                height=#2\textwidth,
                keepaspectratio=false,
                center,
                trim={0 0 0 0},
                clip
            }{qualitative_table/images/#4}%
        };
        \if\relax\detokenize{#1}\relax\else
            \node[
                anchor=#3,
                at=(img.#3),
                inner sep=1pt,
                fill=white,
                opacity=0.8,
                text opacity=1,
                font=\tiny\bfseries,
                text=black
            ] {#1};
        \fi
    \end{tikzpicture}%
}

\title{A Flow Matching Algorithm for Many-Shot Adaptation to Unseen Distributions}
\date{}

\author{%
Tyler Ingebrand\thanks{Equal contribution} \\
University of Texas at Austin \\
\texttt{\small tyleringebrand@utexas.edu} \\
\And
Ruihan Zhao\footnotemark[1]     \\
University of Texas at Austin \\
\texttt{\small ruihan.zhao@utexas.edu} \\
\And 
Kushagra Gupta \\
University of Texas at Austin \\
\texttt{\small kushagrag@utexas.edu} \\
\AND
David Fridovich-Keil \\
University of Texas at Austin \\
\texttt{\small dfk@utexas.edu} \\
\And
Sandeep P. Chinchali \\
University of Texas at Austin \\
\texttt{\small sandeepc@utexas.edu} \\
\And
Ufuk Topcu \\
University of Texas at Austin \\
\texttt{\small utopcu@utexas.edu} \\
}

\begin{document}
\maketitle
% Function space adaptation for many-shot flow matching TODO

\begin{abstract}

While generative modeling has achieved remarkable success on tasks like natural language-conditioned image generation, enabling model adaptation from  example data points remains a relatively underexplored and challenging problem. To this end, we propose Function Projection for Flow Matching (FP-FM), an algorithm that directly conditions generation on samples from the target distribution. FP-FM learns basis functions to span the velocity fields corresponding to a set of training distributions, and adapts to new distributions by computing a simple least-squares projection onto this basis. This enables efficient generation of samples from diverse target distributions without additional training at inference time. We further introduce multiple variants of FP-FM that provide a trade-off in expressivity and compute by enriching the coefficient calculation, e.g., by making the coefficients dependent on time. FP-FM achieves greatly improved precision and recall relative to baselines across synthetic and image-based datasets, with especially strong gains on unseen distributions.

\end{abstract}

\section{Introduction}

% Current state: generative modeling is wildly successful, but the typical approach is to condition on natural language. Not everything is representable that way. 
Generative modeling techniques such as diffusion and flow matching have successfully demonstrated the ability to create high-fidelity, synthetic data modeled after a target distribution \cite{ddpm, fm,  stablediffusion}.
Generative modeling is useful both in domains where the generated object is the end goal, such as image editing or video generation, as well as in domains where the generated object is used for downstream tasks, i.e., acting as a world model for a reinforcement learning agent \cite{worldmodel, genie}. 
For many of these domains, the target distribution changes depending on the end user's needs, and there may only be limited observed samples from this target distribution.
Furthermore, it is often useful if the synthetic samples are generated with minimal latency.
In other words, we want a model capable of efficiently generating \textit{many} distributions based on user specifications.

% Other approaches for conditional distributions include ...
The most common approach for conditional generation is to simply include a conditioning variable as an additional input to the model. 
Most often, this conditioning variable takes the form of natural language (e.g., ``an image of a cat'') \cite{dalle, imagen}, but it may take on other forms such as a one-hot encoding of a class \cite{classifierfree}. 
However, conditioning variables are not always the best approach. 
For example, suppose we want to generate images of a specific person. A natural language description of that person's appearance is likely ineffective, while providing real pictures of this person as examples is more intuitive. 
Thus, conditioning variables are not sufficient for all tasks and we must consider algorithms which take samples from the target distribution as input. 

% Picture worth a thousand words; Let's instead condition on images from the target distribution
We introduce Function Projection for Flow Matching (FP-FM), an algorithm for conditional generation where the model is explicitly conditioned on samples from a target distribution (See Figure \ref{fig:concept}).  
FP-FM first learns the $k$ most important basis functions in the function space of velocity fields induced by a set of training distributions under flow matching dynamics.
By linearly combining these basis functions depending on the provided data, FP-FM is able to approximate the velocity field of a new distribution via a series of simple least-squares calculations. 
Furthermore, a natural concern for generative modeling is the fidelity-overfitting tradeoff, where highly expressive models may only reproduce the provided samples, and insufficiently expressive models generate infeasible samples. 
To this end, we present three FP-FM variants spanning this spectrum, allowing practitioners to choose the desired level of model expressivity based on their priorities.

\begin{figure}[t]
    \centering
    \includegraphics[width=1.0\linewidth,  trim=70pt 150pt 33pt 12pt,
    clip]{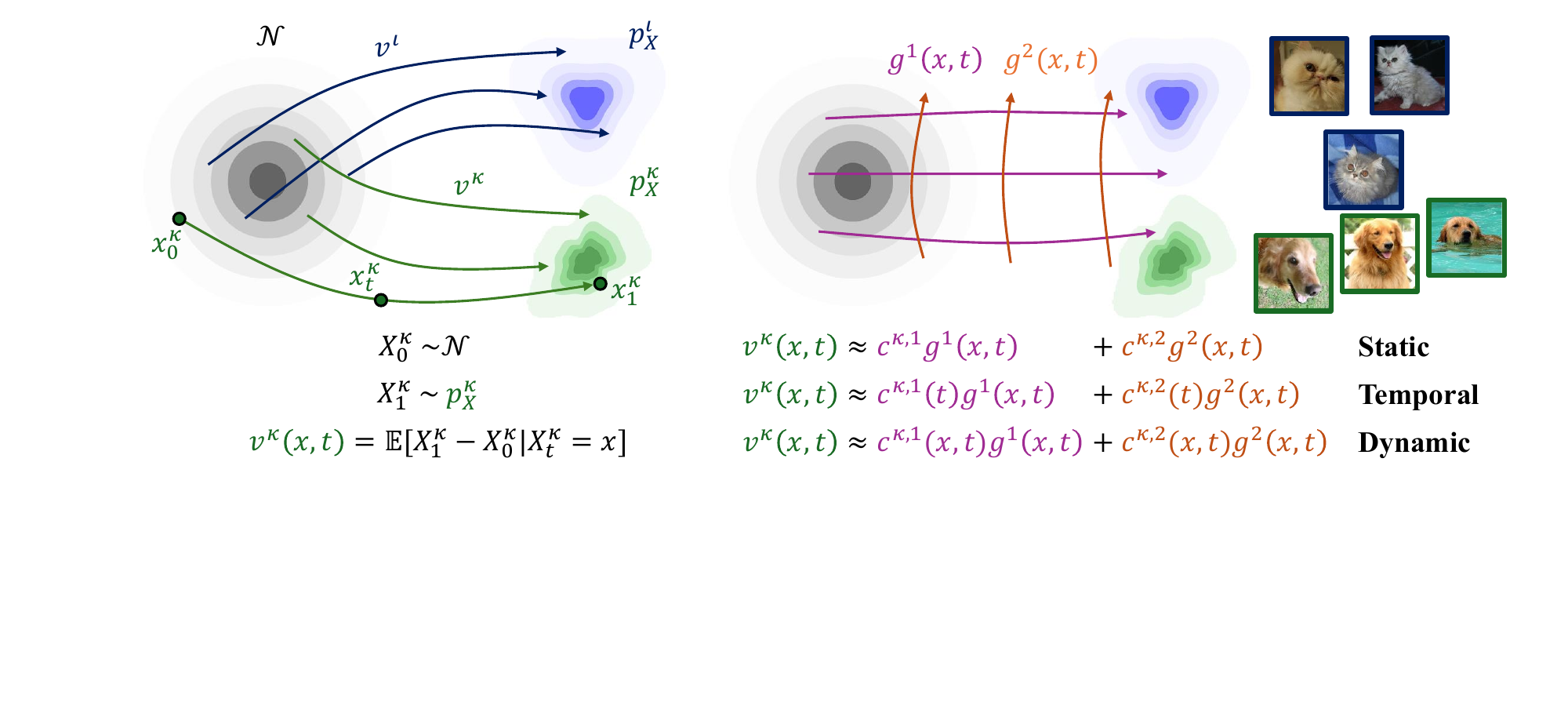}
    \caption{\textbf{Conceptual diagram.} (\textit{Left}) Illustration of two distributions, $p_X^\iota$ and $p_X^\kappa$, together with their associated velocity fields $v^\iota$ and $v^\kappa$. Probability densities are depicted as shaded regions, while velocity fields are indicated by arrows. Both stochastic processes share a common initial distribution, a Normal distribution shown in black.
    (\textit{Right}) FP-FM learns a set of basis functions that span the space of velocity fields, allowing a target field $v^\kappa$ to be represented as a linear combination of these bases. The three FP-FM variants, static, temporal, and dynamic, differ in whether the coefficients are fixed, time-dependent, or state and time-dependent, respectively.
    }
    \label{fig:concept}
\end{figure}

FP-FM is distinguished from prior work based on a few key characteristics.
Unlike conditional models, FP-FM does not require a variable describing the target distributions; it only requires samples. 
On the other hand, FP-FM may match unseen target distributions without any additional training. 
Thus, it is computationally more efficient than finetuning, which requires expensive gradient steps to model a new distribution.

% Our results show...
We compare FP-FM against numerous baselines, including unconditional and conditional flow matching, classifier- and distribution-guided methods, and standard finetuning. Across a range of datasets and evaluation settings, we find that FP-FM achieves strong performance on both seen and unseen distributions. In particular, FP-FM consistently improves precision on out-of-distribution targets while maintaining competitive recall, highlighting its ability to capture the desired distribution without over-approximating the data manifold. Moreover, the different variants of FP-FM expose a clear tradeoff between expressivity and computational cost, allowing practitioners to select the most appropriate model for their setting. These results demonstrate that explicitly conditioning on samples using a function space perspective is a practical and effective alternative to existing approaches.

\section{Background}

We first clarify notation. We use $\iota \in \mathcal{I}$ to index over an arbitrary set and $i$ to index over a discrete set. Superscripts denote indices, e.g., $c^\iota$ or $c^i$. When multiple indices are required, we write $c^{\iota,i}$.
Subscripts denote time, such as a stochastic process $X_t$, or distributions, such as $p_X$.

\subsection{Flow Matching} \label{sec: fm}
% math
We build upon the flow matching framework \cite{fm}. 
Consider a data space $\mathcal{X} \subset \mathbb{R}^n$ and a probability density $p_X: \mathcal{X} \to \mathbb{R}_{> 0}$ with $\int_\mathcal{X}p_X(x) dx = 1$.
We are given $m$ samples $x^1, \dots, x^m \overset{\text{i.i.d.}}{\sim} p_X$.
The objective is to learn a generative model that produces samples $\hat{x}$ whose distribution approximately matches $p_X$. 
Flow matching defines a stochastic process, $X_t$ for $t\in[0,1]$, such that simulating this stochastic process approximately yields samples from $p_X$.  
To do so, choose a noise distribution that is easy to sample from, typically the standard normal distribution. 
Then  define $X_0 \sim \mathcal{N}(0,I)$, $X_1 \sim p_X$, and $X_t := (1-t)X_0 + tX_1$. 
% This path linearly interpolates from the noise distribution to the target distribution, but unfortunately simulating a specific realization of $X_t$ requires us to sample $X_1$, which we cannot do. 
To approximate this process, train a model $v$ to fit the expected velocity field, $v(x,t) = \mathbb{E}[X_1 - X_0|X_t=x]$, and define the approximated stochastic process as $X_0^{'} \sim \mathcal{N}(0, I), X_t^{'}:=X_0^{'}+\int_0^t v(X_\tau^{'}, \tau) d\tau$.
% Interestingly, these two processes are equal in distribution, $X_t^{'} \overset{d}{=}X_t$, meaning that marginal distributions at any $t$ are equal even though the paths taken by their respective samples are distinct. 
Then $X_t^{'} \overset{d}{=}X_t$ and it is feasible to simulate $X_t^{'}$, which generates samples from $p_X$. 

% However, simulating $X_t^{'}$ does not require us to sample $X_1$, and it is therefore feasible for sampling  \cite{fm}.\footnote{Since $X_t^{'} \overset{d}{=}X_t$, it is common to abuse notation by referring to both processes as $X_t$ even though samples of these processes follow very different paths.}
% Thus, to approximately sample from $p_X$, we simply sample from the noise distribution and then integrate the ODE defined by the trained model $v$.

% To train the velocity field, we first sample $X_0 \sim \mathcal{N}(0,I)$, $X_1 \sim \text{Unif}(\{x^i\}_{i=1}^m)$, and $t \sim \text{Unif}([0,1])$. Then we compute $X_t$ and the mean-squared error of $v_\theta(X_t,t)$ relative to $X_1 - X_0$. Note that the minimum mean squared error predictor for $X_1 - X_0 | X_t=x$ is exactly $\mathbb{E}[X_1 - X_0|X_t=x]$, motivating the use of mean-squared error as a loss function.
% While the velocity field itself may be a complicated object to learn, this formulation neatly fits into a standard supervised learning setting and thus many of the standard tools and intuition apply. 
% Furthermore, mean squared error has deep connections to inner products and Hilbert spaces, a connection we will exploit in the following sections. 

\subsection{Function Encoders} \label{sec: fe}

In this section, we introduce the function encoder (FE) algorithm \cite{fe}. 
% Let us first begin by defining key terms. 
 Consider a Hilbert space $\mathcal{H}=\{f: \mathcal{X} \to \mathbb{R}^n\}$ with an inner product $\mathcal{\langle \cdot, \cdot \rangle_\mathcal{H}}$.  
As an example, an inner product under an input data distribution $p_x$ is $\langle f,g\rangle_{p_X} := \frac{1}{n}\mathbb{E}_{X}[f(X)^\top g(X)]$.
In particular, this inner product has connections to mean squared error:
\begin{equation}
\begin{aligned}
% \|f - \hat{f}\|^2_{p_X} 
% &= \langle f - \hat{f}, f - \hat{f} \rangle_{p_X} \\
% &= \frac{1}{n}\mathbb{E}_{X }\big[(f(X) - \hat{f}(X))^\top (f(X) - \hat{f}(X))\big] \\
% &= \mathbb{E}_{X}\big[\frac{1}{n}\sum_{i=1}^n(f^i(X) - \hat{f}^i(X))^2\big]
\|f - \hat{f}\|^2_{p_X} = \mathbb{E}_{X}\big[\frac{1}{n}\sum_{i=1}^n(f^i(X) - \hat{f}^i(X))^2\big].
\end{aligned}
\end{equation}
Thus, mean squared error is equivalent to squared distance in this function space. 

The function encoder \cite{fe} is an algorithm to learn a set of basis functions  $\{g^i: \mathcal{X} \to \mathbb{R}^n \}_{i=1}^k$ parametrized as neural networks to span a function space, where any function $f \in \mathcal{H}$ is represented as a linear combination of the basis, $f = \sum_{i=1}^k c^ig^i$. Computing the scalar coefficients  $c \in \mathbb{R}^k$ requires a dataset of input-output pairs, $\{(x^j, y^j)\}_{j=1}^m$. These input-output pairs are used to approximate the inner product, and the coefficients are calculated as the solution to a least-squares problem:
\begin{equation} \label{eqn: least squares}
c = \begin{bmatrix}
\langle g^1, g^1 \rangle_\mathcal{H} & \hdots & \langle g^1, g^k \rangle_\mathcal{H} \\
\vdots & \ddots & \vdots \\
\langle g^k, g^1 \rangle_\mathcal{H} & \hdots & \langle g^k, g^k \rangle_\mathcal{H} \\
\end{bmatrix}^{-1}
\begin{bmatrix}
\langle f, g^1 \rangle_\mathcal{H} \\
\vdots \\
\langle f, g^k \rangle_\mathcal{H} \\
\end{bmatrix}.
\end{equation}

Function encoders are useful because they allow for efficient adaptation to new functions while remaining highly expressive. For more information on this algorithm, see \cite{fe}.
% We will use this algorithm as a starting point for the methods below. 

\section{Methods}

\subsection{Problem Setting}

In this section, we extend the flow matching formulation to a \textit{collection} of distributions. Rather than a single target distribution, we are given a finite family of training distributions, each observed through samples. Our goal is to learn a model that can generate samples from any of these distributions.
Moreover, the goal is adaptation: after training, given samples from a previously unseen distribution, the model should efficiently adjust and generate samples from this new distribution.

To formalize this setting, let $\{p_X^\iota\}_{\iota \in \mathcal{I}}$ be a set of target data distributions. 
Assume we only observe a finite subset of the distributions in $\mathcal{I}$ during training. That is, there exists a training subset $\mathcal{T} \subset \mathcal{I}$.
We denote a dataset of independent samples from $p_X^\iota$ by $\mathcal{D}^\iota := \{x^{\iota,i}\}_{i=1}^m$, and assume such a dataset is available for each $\iota \in \mathcal{T}$; i.e., we have access to $\{\mathcal{D}^\iota\}_{\iota \in \mathcal{T}}$.
The goal is to train a model that can generate samples from each training distribution $p_X^\iota$. After training, our model should generalize to a new distribution $p_X^\iota$, $\iota \in \mathcal{I} \setminus \mathcal{T}$, given an unseen set of samples $x^{\iota,1}, ..., x^{\iota,m} \overset{i.i.d.}{\sim} p_X^\iota$. 
This implies our learned model conditions its output distribution on the samples $x^{\iota,1}, ..., x^{\iota,m}$. As an example, the most obvious solution to this problem is to finetune a model based on this new dataset.

\subsection{Approach} \label{sec:approach}

We begin by highlighting three observations that suggest a function encoder-style algorithm is well-suited to this problem. Firstly, for each $p_X^\iota$, we may define a stochastic process $X_t^\iota$ and its corresponding velocity field $v^\iota$ as in Section \ref{sec: fm}. 
Thus, we may view this problem as fitting velocity fields in the set $\mathcal{V} = \{v^\iota: \mathcal{X} \times [0,1] \to \mathbb{R}^n | \iota \in \mathcal{I}\}$.
Secondly, a crucial detail in the flow matching framework is that the loss function must be mean squared error in order to learn $\mathbb{E}[X_1 - X_0|X_t=x]$.
Mean squared error, in turn, can be viewed as the (squared) norm induced by the distribution-weighted inner product, as highlighted in section \ref{sec: fe}. Thus, it is natural to equip $\mathcal{V}$ with the distribution-weighted inner product as this maintains the geometry present in the flow matching framework. 
Thirdly,  access to the dataset $\mathcal{D}^\iota$ allows us to approximate inner products involving  the unobserved $v^\iota$ and a known function $g$:
\begin{align}
\langle v^\iota, g \rangle_{p_{t,X^\iota_t}} 
&= \mathbb{E}_{t,X_t^\iota} \left[ v^\iota(X^\iota_t,t)^\top g(X^\iota_t, t) \right] \\
&= \mathbb{E}_{t, X_t^\iota } \left[ \mathbb{E}_{X_1,X_0|X_t}\left[X^\iota_1 - X^\iota_0 \mid X^\iota_t \right]^\top g(X^\iota_t,t) \right] \label{eq:approx_ip}\\
&= \mathbb{E}_{t, X_1^\iota, X^\iota_0} \left[ (X^\iota_1 - X^\iota_0)^\top g((1-t)X_0^\iota + tX_1^\iota,t) \right] \label{eq:tower_property}
\end{align}
% Line \eqref{eq:tower_property} is a result of applying the tower property of conditional expectations. 
% It is clear we can sample $t, X_1^\iota,$ and $X_0^\iota$ as in Section \ref{sec: fm}. 
Equation \eqref{eq:tower_property} allows us to avoid sampling $X_t^\iota$ directly and does not require access to $v^\iota$. 
Therefore, we may approximate inner products involving $v^\iota$ under the current data assumptions. 
With these three observations taken together, it is clear this setting fits neatly into the function encoder framework.
In the following sections, we will introduce three variations of our method building upon these ideas. 

\subsubsection{Static FP-FM}

As suggested by the previous section, the most obvious approach is to directly apply the function encoder algorithm to this setting. That is, we approximate the velocity field $v^\iota$ as 
\begin{equation} \label{cfe-def}
    v^\iota(x,t) \approx \sum_{i=1}^k c^{\iota, i} g^i(x,t),
\end{equation}
where $\{g^i\}_{i=1}^k$ are neural networks and $c^\iota \in \mathbb{R}^k$. As the notation implies, for each velocity field $v^\iota$, there is a corresponding vector $c^\iota$. We may calculate the coefficients using least squares as above:
\begin{equation} \label{eqn: cfe-ls}
c^\iota = \begin{bmatrix}
\langle g^1, g^1 \rangle_{p_{t,X_t^\iota}} & \hdots & \langle g^1, g^k \rangle_{p_{t,X_t^\iota}} \\
\vdots & \ddots & \vdots \\
\langle g^k, g^1 \rangle_{p_{t,X_t^\iota}} & \hdots & \langle g^k, g^k \rangle_{p_{t,X_t^\iota}} \\
\end{bmatrix}^{-1}
\begin{bmatrix}
\langle v^\iota, g^1 \rangle_{p_{t,X_t^\iota}} \\
\vdots \\
\langle v^\iota, g^k \rangle_{p_{t,X_t^\iota}} \\
\end{bmatrix}.
\end{equation}
The key detail is that the distribution used to compute the inner product is $p_{t,X_t^\iota}$, the distribution of $X_t$'s corresponding to a specific target distribution $p_X^\iota$. Interestingly, this implies there is a separate inner product for each target distribution. This poses interesting questions from a mathematical perspective, which we discuss in Appendix \ref{app:function-distribution-pairs}. For the purposes of flow matching, the key detail is that we want one set of basis functions to span all of the target velocity fields, but we use a distribution-specific inner product to find the best approximation for a given velocity field within this basis. 
% The intuition is that for any specific velocity field $v^\iota$, we want to approximate $v^\iota$ as closely as possible for states that are likely under $X_t^\iota$, and we care less about the approximation error for states that are unlikely under $X_t^\iota$. 
To train the basis functions, we leverage the function encoder algorithm \cite{fe}, though care must be taken to ensure the correct inner product is used to compute each $c^\iota$. See Appendix \ref{app:algs}. 

For reasons that will later become obvious, we call this version of our approach \static. So long as we use a sufficient number of sufficiently expressive basis functions (see \cite{fe}), we may reasonably approximate the velocity fields of the training functions, $\{v^\iota  | \iota \in \mathcal{T} \}$.  Thus, the constant function encoder may generate new samples that are distributed approximately the same as any distribution $p_X^\iota$ in its training set. However, for $p_X^\iota, \iota \in \mathcal{I} \setminus \mathcal{T}$, problems arise. Using Equation \eqref{eqn: cfe-ls} yields the best approximation of any new velocity field $v_\iota$ within the linear span of the basis. However, the continuity equation is nonlinear, meaning that the relationship between a distribution $p_X^\iota$ and its velocity field $v^\iota$ is nonlinear. 
% Therefore, even if we have trained the basis to span the velocity fields of the training set, a mixture distribution  is not necessarily within the span of our basis. 
% Thus, the constant function encoder may struggle to generalize even to distributions we intuitively expect should be ``easy''. 
Therefore, even if the basis spans the velocity fields in the training set, an ``easy'' distribution such as a mixture is not necessarily within the span of the basis. 
In the following sections, we will introduce modifications to this approach that address this issue.

\subsubsection{Temporal FP-FM}

To make FP-FM transfer to new distributions, we must correct the nonlinearity issue. The approximation of a given velocity field depends on two parts: The basis functions and the coefficients. The basis functions themselves are already highly expressive as they are neural networks, so instead we modify the coefficient calculation. A straightforward way of doing so is to make the coefficients dependent on time. Let us first introduce a variation of the notation. 
For a given time $t$, and any function $h: \mathcal{X} \times [0,1] \to \mathbb{R}^n$, let the function at a fixed time $t$ be denoted as $h_t(\cdot) :=h(\cdot, t)$.
% This notation helps change our understanding of the velocity fields (and basis functions) from functions mapping states and times to velocities as instead a time-indexed family of functions mapping state to velocity. 
Then we can modify the equations above: 
\begin{equation} \label{tfe-def}
    v^\iota_t(x) \approx \sum_{i=1}^k c^{\iota,i}(t) g^i_t(x).
\end{equation}
To calculate the coefficients, we must also change the inner product:
\begin{equation} \label{eqn: tfe-ls}
c^\iota(t) = \begin{bmatrix}
\langle g_t^1, g_t^1 \rangle_{p_{X_t^\iota|t}} & \hdots & \langle g_t^1, g_t^k \rangle_{p_{X_t^\iota|t}} \\
\vdots & \ddots & \vdots \\
\langle g_t^k, g_t^1 \rangle_{p_{X_t^\iota|t}} & \hdots & \langle g_t^k, g_t^k \rangle_{p_{X_t^\iota|t}} \\
\end{bmatrix}^{-1}
\begin{bmatrix}
\langle v^\iota_t, g_t^1 \rangle_{p_{X_t^\iota|t}} \\
\vdots \\
\langle v^\iota_t, g_t^k \rangle_{p_{X_t^\iota|t}} \\
\end{bmatrix}.
\end{equation}
Note that the distribution for the inner product is $p_{X_t^\iota|t}$; that is, it depends on the distribution of states at that specific time $t$. While this change appears small in notation, its impact is profound. We offer three separate ways to understand the significance, starting with the concrete and moving towards the abstract. Firstly, during generation, we must calculate the coefficients at every time $t$, rather than only once. Secondly, we can now view the coefficients as a function $c^\iota: [0,1] \to \mathbb{R}^k$, meaning that the coefficients are now a function of time. Thirdly, we can view this as modifying the function space definition. 
Where before we were training basis functions to span $\{v^\iota: \mathcal{X} \times [0,1] \to \mathbb{R}^n | \iota \in \mathcal{I}\}$, we are now training basis functions to span $\{v^\iota_t: \mathcal{X} \to \mathbb{R}^n | \iota \in \mathcal{I}, t \in [0,1]\}$. Where before we had $k$ basis functions $\{g^i\}_{i=1}^k$,  we now have $k$ basis functions at every time $t$, i.e., $\{g_t^i\}_{i=1}^k$.

% \temporal   is strictly more expressive than \static. For example, \temporal   can recover \static   if $c^\iota(t)$ is constant with respect to $t$. Thus, \temporal   encoder may approximate a greater set of velocity fields than \static, and therefore generalizes to a larger set of target distributions. Later on, we will discuss the tradeoffs in more detail. 

\subsubsection{Dynamic FP-FM}

In this section, we take this idea to its natural conclusion. If we can make the model more expressive by making its coefficients depend on time, then we can further increase the expressivity by making the coefficients depend on the state:
\begin{equation} \label{stfe-def}
    v^\iota(x,t) \approx \sum_{i=1}^k c^{\iota, i}(x,t) g^i(x,t).
\end{equation}
To compute the localized coefficients $c^\iota(x,t)$, we evaluate the inner product defined previously, but condition it on the observed state $X_t=x$ and time $t$. Denoting this conditional inner product as $\langle \cdot, \cdot \rangle_{x,t}$, the coefficients are computed as the solution to the localized least-squares problem:
\begin{equation} \label{eqn: stfe-ls}
c^\iota(x,t) = \begin{bmatrix}
\langle g^1, g^1 \rangle_{x,t} & \hdots & \langle g^1, g^k \rangle_{x,t} \\
\vdots & \ddots & \vdots \\
\langle g^k, g^1 \rangle_{x,t} & \hdots & \langle g^k, g^k \rangle_{x,t} \\
\end{bmatrix}^{-1}
\begin{bmatrix}
\langle v^\iota, g^1 \rangle_{x,t} \\
\vdots \\
\langle v^\iota, g^k \rangle_{x,t} \\
\end{bmatrix}.
\end{equation}
Because $x$ and $t$ are fixed, the conditional expectation collapses. Thus, we can directly compute the left-hand side as $\langle g^i, g^j\rangle_{x,t} = \frac{1}{n}g^i(x,t)^\top g^j(x,t)$. However, we still do not have direct access to the true velocity field $v^\iota$, meaning we must still use Equation \eqref{eq:approx_ip}. By exploiting the conditional distribution, the right-hand side simplifies to $\langle v^\iota, g^i \rangle_{x,t} = \frac{1}{n} \mathbb{E}_{X_1,X_0|X_t}\big[X^\iota_1 - X^\iota_0 \mid X^\iota_t=x \big]^\top g^i(x,t)$, which we again approximate from empirical data. Putting these two facts together, solving Equation \eqref{eqn: stfe-ls} effectively performs two steps: 1) computing an empirical estimate of the mean velocity field given $X_t^\iota=x$, and 2) finding the best approximation of this vector within the $k$-dimensional subspace of $\mathbb{R}^n$ spanned by the basis $\{g^i(x,t)\}_{i=1}^k$.
We can interpret this as similar to a purely data-driven approach, except the learned functions $\{g^i\}$ play the crucial role of regularizing the approximation based on the data present in the training set. 

% Another important consideration is the approximation of $\mathbb{E}_{X_1,X_0|X_t}\left[X^\iota_1 - X^\iota_0 \mid X^\iota_t=x \right]$. Since the expectation is conditioned on $X_t^\iota=x$, we can no longer approximate this expectation by independently sampling $X_0^\iota$ and $X_1^\iota$, as we must ensure $X_t^\iota = (1-t)X_0^\iota + t X_1^\iota$. Instead, we make use of the following Theorem:
% \begin{theorem} \label{thm: sampling}
%     Given $X_t=x$, and the random variable $X_1$, define $X_0^*=\frac{X_t - tX_1}{1-t}$. Then 
%     \[    \mathbb{E}_{X_1,X_0|X_t}\left[X_1 - X_0 \mid X_t=x \right] = E_{X_1}\left[(X_1 - X^*_0) \frac{ p(X^*_0)}{E_{X_1}\left[p\left(X^*_0\right)\right]}\right]. \]
% \end{theorem}
% This formulation is convenient for Monte Carlo approximation. We already have samples from the target distribution, and computing $X_0^*$ is straightforward given $X_t = x$ and $X_1$. Moreover, because the noise distribution is Gaussian, $p(X_0^*)$ can be evaluated directly. These properties together make it easy to construct an accurate estimator of the conditional expectation. A derivation of the theorem and a detailed description of the sampling procedure are provided in Appendix \ref{thm: sampling}.

Another important consideration is approximating the conditional expectation $\mathbb{E}[X^\iota_1 - X^\iota_0 \mid X^\iota_t=x]$. Because this expectation is conditioned on the fixed state, we can no longer estimate it by drawing independent samples of $X_0^\iota$ and $X_1^\iota$; any valid pair must strictly satisfy the deterministic constraint $X_t^\iota = (1-t)X_0^\iota + t X_1^\iota$. Instead, we bypass this issue using the following general result:

\begin{theorem} \label{thm: sampling}
    Given the fixed noisy example $X_t=x$ and a random variable $X_1$, let $X_0^*=\frac{x - tX_1}{1-t}$. The conditional expectation can be expressed as an expectation over the marginal distribution of $X_1$:
    \begin{equation*}
        \mathbb{E}[X_1 - X_0 \mid X_t=x] = \mathbb{E}_{X_1}\left[(X_1 - X^*_0) \frac{ p(X^*_0)}{\mathbb{E}_{X_1'}[p(X^{*\prime}_0)]}\right]. 
    \end{equation*}
\end{theorem}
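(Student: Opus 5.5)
The plan is to compute the conditional density of $X_1$ given $X_t=x$ by Bayes' rule, using the independence of $X_0 \sim \mathcal{N}(0,I)$ and $X_1 \sim p_X$ together with the linear relation $X_t=(1-t)X_0+tX_1$. Throughout we fix $t\in[0,1)$ so that $1-t>0$. Write $p$ for the standard normal density of $X_0$.

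The first step is the joint density of $(X_1,X_t)$. For fixed $X_1=x_1$, the map $x_0\mapsto (1-t)x_0+tx_1$ is an affine bijection of $\mathbb{R}^n$ with Jacobian determinant $(1-t)^n$. Hence
\begin{equation*}
p_{X_t\mid X_1}(x\mid x_1)=(1-t)^{-n}\,p\!\left(\tfrac{x-tx_1}{1-t}\right),
\qquad
p_{X_1,X_t}(x_1,x)=p_X(x_1)\,(1-t)^{-n}\,p\!\left(\tfrac{x-tx_1}{1-t}\right).
\end{equation*}
Integrating over $x_1$ gives $p_{X_t}(x)=(1-t)^{-n}\,\mathbb{E}_{X_1'}[p(X_0^{*\prime})]$. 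This quantity is strictly positive because $p>0$ everywhere. Dividing, the factor $(1-t)^{-n}$ cancels and we obtain
\begin{equation*}
p_{X_1\mid X_t}(x_1\mid x)=p_X(x_1)\,\frac{p(x_0^*(x_1))}{\mathbb{E}_{X_1'}[p(X_0^{*\prime})]},
\qquad x_0^*(x_1)=\frac{x-tx_1}{1-t}.
\end{equation*}

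The second step uses the constraint. Conditionally on $X_t=x$ and $X_1=x_1$, the noise is forced to be $X_0=x_0^*(x_1)$, so $X_1-X_0=x_1-x_0^*(x_1)$ is a deterministic function of $X_1$ on the event $\{X_t=x\}$. Therefore
\begin{equation*}
\mathbb{E}[X_1-X_0\mid X_t=x]=\int \bigl(x_1-x_0^*(x_1)\bigr)\,p_{X_1\mid X_t}(x_1\mid x)\,dx_1 .
\end{equation*}
Substituting the conditional density from the first step turns this integral against $p_X$ into exactly the stated expectation over the marginal of $X_1$, with the normalizing constant $\mathbb{E}_{X_1'}[p(X_0^{*\prime})]$ pulled outside.

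The main obstacle is rigor rather than algebra. Conditioning on the null event $\{X_t=x\}$ must be interpreted as a regular conditional distribution, that is, the version given by the density formula above. Integrability also needs checking: because $p$ is bounded, $\mathbb{E}\|X_1\|<\infty$ suffices for the expectations to be finite. I would state these as the standing assumptions and note that the case $t=1$ is excluded, since $X_0^*$ is undefined there.
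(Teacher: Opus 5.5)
Your proposal is correct and follows essentially the same route as the paper: Bayes' rule using the independence of $X_0$ and $X_1$, integrating out $x_0$ to pick up the Jacobian factor $(1-t)^{-n}$, and cancelling it against the identical factor in $p(x_t)$. The only difference is that you get that factor from an explicit change of variables for the pair $(X_1,X_t)$ rather than from the paper's Dirac-delta identity $\int f(x_0)\delta(h(x_0))\,dx_0 = f(x_0^*)/|\det J_h(x_0^*)|$, which makes the same step rigorous.
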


This formulation is highly amenable to Monte Carlo approximation. We already possess samples from the target distribution $p(X_1)$, and computing $X_0^*$ is a trivial linear operation given $x$ and $X_1$. Furthermore, assuming the base noise distribution is Gaussian, $p(X_0^*)$ can be evaluated analytically. Together, these properties yield an efficient, self-normalized importance sampling estimator for the conditional expectation. A full derivation of the theorem and algorithmic details regarding the sampling procedure are deferred to Appendix \ref{app: sampling}.

\subsubsection{Tradeoffs}

Next, we will discuss the tradeoffs between these three approaches. As suggested in the preceding sections, we observe an ordering in expressive power: \dynamic   exhibits the greatest expressivity, followed by the \temporal, and finally \static. 
% Note that flow matching inherits much of the intuition from standard regression, and in that sense \dynamic  is also most susceptible to overfitting, followed by \temporal, and then \static. 
On the other hand, we observe the opposite relationship in terms of compute. \static   is the most compute efficient, as it computes the coefficients only once per distribution. \temporal   is the next most efficient, as it computes coefficients once per timestep of the integrator. Lastly, \dynamic   computes the coefficients once per timestep per sample. 
In the following sections, we will empirically compare the three approaches to validate these tradeoffs. 

% Usage: \qualimg{<baseline>_<dataset>_<dist>.png}
% Example: \qualimg{Unconditional_MNIST_TD.png}

\begin{figure*}[!h]
\centering
\setlength{\tabcolsep}{1pt}
\renewcommand{\arraystretch}{0.0}

\begin{tabular}{C{0.04\textwidth} C{0.05\textwidth}  *{8}{C{0.08\textwidth}}}
\toprule
\textbf{Dataset} &  
& \textbf{Uncond.}
& \textbf{Cond.}
& \textbf{Cls. Guided}
& \textbf{Dist. Guided}
& \textbf{Fine tune}
& \textbf{Static}
& \textbf{Tem-poral}
& \textbf{Dynamic} \\
\midrule

\multirow[c]{3}{*}{\vspace{-50pt} \rotatebox{90}{\textbf{2D Arcs}}}
& TD
& \qualimg{Unconditional_2DArcs_TD.png}
& \qualimg{Conditional_2DArcs_TD.png}
& \qualimg{ClassifierGuided_2DArcs_TD.png}
& \qualimg{DistributionGuided_2DArcs_TD.png}
& \qualimg{Finetune_2DArcs_TD.png}
& \qualimg{ConstantFE_2DArcs_TD.png}
& \qualimg{TimeFE_2DArcs_TD.png}
& \qualimg{StateTimeFE_2DArcs_TD.png} \\
& UD
& \qualimg{Unconditional_2DArcs_UD.png}
& \qualimg{Conditional_2DArcs_UD.png}
& \qualimg{ClassifierGuided_2DArcs_UD.png}
& \qualimg{DistributionGuided_2DArcs_UD.png}
& \qualimg{Finetune_2DArcs_UD.png}
& \qualimg{ConstantFE_2DArcs_UD.png}
& \qualimg{TimeFE_2DArcs_UD.png}
& \qualimg{StateTimeFE_2DArcs_UD.png} \\
& US
& \qualimg{Unconditional_2DArcs_US.png}
& \qualimg{Conditional_2DArcs_US.png}
& \qualimg{ClassifierGuided_2DArcs_US.png}
& \qualimg{DistributionGuided_2DArcs_US.png}
& \qualimg{Finetune_2DArcs_US.png}
& \qualimg{ConstantFE_2DArcs_US.png}
& \qualimg{TimeFE_2DArcs_US.png}
& \qualimg{StateTimeFE_2DArcs_US.png} \\
\midrule

\multirow[c]{3}{*}{\vspace{-50pt}  \rotatebox{90}{\textbf{MNIST}}}
& TD
& \qualimg{Unconditional_MNIST_TD.png}
& \qualimg{Conditional_MNIST_TD.png}
& \qualimg{ClassifierGuided_MNIST_TD.png}
& \qualimg{DistributionGuided_MNIST_TD.png}
& \qualimg{Finetune_MNIST_TD.png}
& \qualimg{ConstantFE_MNIST_TD.png}
& \qualimg{TimeFE_MNIST_TD.png}
& \qualimg{StateTimeFE_MNIST_TD.png} \\
& UD
& \qualimg{Unconditional_MNIST_UD.png}
& \qualimg{Conditional_MNIST_UD.png}
& \qualimg{ClassifierGuided_MNIST_UD.png}
& \qualimg{DistributionGuided_MNIST_UD.png}
& \qualimg{Finetune_MNIST_UD.png}
& \qualimg{ConstantFE_MNIST_UD.png}
& \qualimg{TimeFE_MNIST_UD.png}
& \qualimg{StateTimeFE_MNIST_UD.png} \\
& US
& \qualimg{Unconditional_MNIST_US.png}
& \qualimg{Conditional_MNIST_US.png}
& \qualimg{ClassifierGuided_MNIST_US.png}
& \qualimg{DistributionGuided_MNIST_US.png}
& \qualimg{Finetune_MNIST_US.png}
& \qualimg{ConstantFE_MNIST_US.png}
& \qualimg{TimeFE_MNIST_US.png}
& \qualimg{StateTimeFE_MNIST_US.png} \\
% \midrule

% \multirow[c]{3}{*}{\vspace{-80pt}  \rotatebox{90}{\textbf{ImageNet}}}
% & TD
% & \qualimg{Unconditional_ImageNet_TD.png}
% & \qualimg{Conditional_ImageNet_TD.png}
% & \qualimg{ClassifierGuided_ImageNet_TD.png}
% & \qualimg{DistributionGuided_ImageNet_TD.png}
% & \qualimg{Finetune_ImageNet_TD.png}
% & \qualimg{ConstantFE_ImageNet_TD.png}
% & \qualimg{TimeFE_ImageNet_TD.png}
% & \qualimg{StateTimeFE_ImageNet_TD.png} \\
% & UD
% & \qualimg{Unconditional_ImageNet_UD.png}
% & \qualimg{Conditional_ImageNet_UD.png}
% & \qualimg{ClassifierGuided_ImageNet_UD.png}
% & \qualimg{DistributionGuided_ImageNet_UD.png}
% & \qualimg{Finetune_ImageNet_UD.png}
% & \qualimg{ConstantFE_ImageNet_UD.png}
% & \qualimg{TimeFE_ImageNet_UD.png}
% & \qualimg{StateTimeFE_ImageNet_UD.png} \\
% & US
% & \qualimg{Unconditional_ImageNet_US.png}
% & \qualimg{Conditional_ImageNet_US.png}
% & \qualimg{ClassifierGuided_ImageNet_US.png}
% & \qualimg{DistributionGuided_ImageNet_US.png}
% & \qualimg{Finetune_ImageNet_US.png}
% & \qualimg{ConstantFE_ImageNet_US.png}
% & \qualimg{TimeFE_ImageNet_US.png}
% & \qualimg{StateTimeFE_ImageNet_US.png} \\
\bottomrule
\end{tabular}
\caption{
Qualitative comparison across datasets and distribution types. For 2D Arcs, the image shows the full distributions, where red points correspond to samples from the target distribution and blue points are samples from the model. For MNIST, each image shows one sample from the model. 
}
\label{fig:qualitative_table}
\end{figure*}

\section{Experiments}

% \subsection{Design}

% \paragraph{Datasets.} 
We empirically validate our approach on three datasets: \textit{2D Arcs}, \textit{MNIST} \cite{mnist}, and \textit{ImageNet} \cite{imagenet}. 2D Arcs is a low-dimensional dataset that enables visualization of the learned distributions. MNIST evaluates performance on structured image data, while ImageNet tests scalability to large-scale, high-dimensional data.
To measure the generalization capabilities of these approaches, we evaluate on three different distribution splits for each dataset. \textit{Training Distribution (TD)} is a distribution that was seen during training. \textit{Unseen Distribution (UD)} is a distribution that is not in the training set, but the individual samples have been seen before. For example, if we train on MNIST on digits 0 through 8, an unseen distribution could be a mixture of 0 and 1. Lastly, \textit{Unseen Support (US)} is an unseen distribution with a different support than what was seen during training. To continue the analogy, if we train on MNIST 0 through 8, then the distribution corresponding to 9 has an unseen support.

% \paragraph{Baselines.} 
We compare against 5 baselines. \unconditional   is a standard flow matching model \cite{fm} trained on all of the training distributions as if they were a single distribution. This acts as naive baseline, and as a starting point for some of the other algorithms. \conditional   is a model provided with a conditioning variable to describe the target distribution, e.g., a one-hot encoding of the class \cite{classifierfree}. 
\cg   trains a classifier for a given target distribution, then uses the score to guide generation of a pretrained, unconditional model \cite{classifierguided}. \dg   likewise starts with a pretrained, unconditional model. It then trains a secondary model to produce noise inputs that, when passed through the pretrained model, generate samples from the target distribution. See Appendix \ref{app:baselines} for a visualization.  Lastly, \finetune   is the standard approach. Given a pretrained unconditional model, finetune it on the data provided for the target distribution. 
Note that there are many tradeoffs you can make during finetuning; e.g., LoRA \cite{hu2021lora} is more compute efficient but less expressive.
For more detailed descriptions of the baselines, see Appendix \ref{app:baselines}.

% \paragraph{Metrics.}
To evaluate these approaches, we use various metrics: \textit{Generation Time} measures how long it takes to generate a batch of samples.
% \footnote{We use a batch of samples to measure generation time to distinguish between \temporal   and \dynamic. If the batch size were 1, these two approaches would have the same generation time.}
\textit{Precision} measures how well the generated samples match the target distribution; \textit{Recall} measures how well real samples match the generated distribution \cite{precisionrecall}. For the image-based datasets, we also use \textit{FID} \cite{fid}.
For implementation details, see Appendix \ref{app:baselines}. 
Qualitative results are shown in Figure \ref{fig:qualitative_table}, and quantitative results are reported in Tables \ref{table:arc}, \ref{table:mnist}, and \ref{table:imagenet}.

% \begin{table*}[t]
% \centering
% \small
% \setlength{\tabcolsep}{4pt}
% \begin{tabular}{l|rrr|rrr|r}
% \toprule
% & \multicolumn{3}{c|}{Precision $\uparrow$} & \multicolumn{3}{c|}{Recall $\uparrow$} & Time (s) $\downarrow$ \\
% Algorithm & TD & UD & US & TD & UD & US & \\
% \midrule
% \unconditional & 0.075 & 0.194 & 0.001 & 0.978 & 0.979 & 0.029 & 0.07 \\
% \conditional & 0.257 & 0.387 & 0.003 & 0.985 & 0.662 & 0.035 & 0.07 \\
% \cg & 0.272 & 0.465 & 0.004 & 0.810 & 0.824 & 0.097 & 7.20 \\
% \dg & 0.227 & 0.250 & 0.178 & 0.990 & 0.989 & 0.989 & 14.48 \\
% \finetune & 0.213 & 0.255 & 0.176 & 0.987 & 0.988 & 0.987 & 4.93 \\
% \static & 0.221 & 0.305 & 0.058 & 0.832 & 0.799 & 0.921 & 0.09 \\
% \temporal & 0.506 & 0.690 & 0.488 & 0.980 & 0.979 & 0.988 & 0.98 \\
% \dynamic & \textbf{0.931} & \textbf{0.976} & \textbf{0.734} & \textbf{0.965} & \textbf{0.962} & \textbf{0.890} & 11.24 \\
% \bottomrule
% \end{tabular}
% \caption{\textbf{2D Arc.} Comparison of methods across TD, UD, and US settings. Arrows indicate whether higher ($\uparrow$) or lower ($\downarrow$) is better. Results are meaned over 5 seeds. Standard deviation is left to the appendix for brevity.} \label{table:arc}
% \end{table*}

\begin{table*}[t]
\centering
\small
\setlength{\tabcolsep}{4pt}

\begin{minipage}{0.75\textwidth}
\centering
\begin{tabular}{l|rrr|rrr|r}
\toprule
& \multicolumn{3}{c|}{Precision $\uparrow$} & \multicolumn{3}{c|}{Recall $\uparrow$} & Time  \\
Algorithm & TD & UD & US & TD & UD & US & \\
\midrule
\unconditional & 0.075 & 0.194 & 0.001 & 0.978 & 0.979 & 0.029 & 0.07 \\
\conditional & 0.257 & 0.387 & 0.003 & 0.985 & 0.662 & 0.035 & 0.07 \\
\cg & 0.272 & 0.465 & 0.004 & 0.810 & 0.824 & 0.097 & 7.20 \\
\dg & 0.227 & 0.250 & 0.178 & 0.990 & 0.989 & 0.989 & 14.48 \\
\finetune & 0.213 & 0.255 & 0.176 & 0.987 & 0.988 & 0.987 & 4.93 \\
\static & 0.221 & 0.305 & 0.058 & 0.832 & 0.799 & 0.921 & 0.09 \\
\temporal & 0.506 & 0.690 & 0.488 & 0.980 & 0.979 & 0.988 & 0.98 \\
\dynamic & \textbf{0.931} & \textbf{0.976} & \textbf{0.734} & \textbf{0.965} & \textbf{0.962} & \textbf{0.890} & 11.24 \\
\bottomrule
\end{tabular}
\end{minipage}
\hfill
\begin{minipage}{0.21\textwidth}
\vspace{10pt}
\caption{\small\textbf{2D Arc.} Comparison of methods across TD, UD, and US settings. Arrows indicate whether higher ($\uparrow$) or lower ($\downarrow$) is better. Results are meaned over 5 seeds. Standard deviation is left to the appendix for brevity.}
\label{table:arc}
\end{minipage}

\end{table*}

\begin{table*}[t]
\centering
\small
\setlength{\tabcolsep}{4pt}
\begin{tabular}{l|rrr|rrr|rrr|r}
\toprule
& \multicolumn{3}{c|}{Precision $\uparrow$} & \multicolumn{3}{c|}{Recall $\uparrow$} & \multicolumn{3}{c|}{FID $\downarrow$} & Time \\
Algorithm & TD & UD & US & TD & UD & US & TD & UD & US & \\
\midrule
\unconditional & 0.296 & 0.442 & 0.306 & 0.698 & 0.698 & 0.694 & 100.30 & 93.13 & 98.79 & 9.93 \\
\conditional & 0.878 & 0.762 & 0.343 & 0.867 & 0.777 & 0.783 & 13.22 & 22.14 & 31.56 & 10.32 \\
\cg & 0.562 & 0.641 & 0.403 & 0.732 & 0.728 & 0.686 & 83.61 & 84.30 & 91.66 & 28.16 \\
\dg & 0.374 & 0.371 & 0.160 & 0.408 & 0.445 & 0.229 & 67.62 & 53.04 & 192.79 & 604.84 \\
\finetune & 0.376 & 0.408 & 0.340 & 0.377 & 0.383 & 0.557 & 40.83 & 40.95 & 29.24 & 209.35 \\
\static & 0.798 & 0.765 & 0.571 & 0.877 & 0.851 & 0.869 & 15.72 & 24.07 & 18.81 & 10.50 \\
\temporal & 0.787 & 0.778 & 0.589 & 0.885 & 0.894 & 0.889 & 14.13 & 17.64 & 17.21 & 20.61 \\
\dynamic & \textbf{0.935} & \textbf{0.938} & \textbf{0.903} & \textbf{0.925} & \textbf{0.929} & \textbf{0.937} & \textbf{10.44} & \textbf{10.64} & \textbf{11.25} & 21.71 \\
\bottomrule
\end{tabular}
\caption{\textbf{MNIST.} Comparison of methods across TD, UD, and US settings. Arrows indicate whether higher ($\uparrow$) or lower ($\downarrow$) is better. Results are meaned over 5 seeds. Standard deviation is left to the appendix for brevity.} \label{table:mnist}
\end{table*}

\subsection{Results}

\paragraph{2D Arcs.}
The TDs consist of uniform distributions over arcs spanning one quarter of the unit circle, parameterized by their central angle. UDs are formed by mixtures of two training distributions. US corresponds to a spiral distribution from the origin to the unit circle. See Figure \ref{fig:qualitative_table} for a visualization. 

Qualitatively, \static   accurately captures the TDs but fails to generalize to either UD or US. In contrast, \temporal   produces reasonable approximations across all three settings, including previously unseen mixtures and supports. \dynamic   further improves upon this, yielding the most accurate reconstructions of the target distributions across TD, UD, and US.
Among the baselines, \unconditional model produces a uniform distribution over the entire unit circle, failing to capture any specific target distribution. \conditional model successfully represents TDs using the angle as a conditioning variable, but this representation does not extend to mixtures or novel supports, leading to poor performance on UD and US. \cg methods capture both TD and UD by steering the unconditional model, but fail to generalize to US. In contrast, \dg and \finetune are able to approximate TD, UD, and US, though with noticeable error.
The quantitative results are consistent with these observations. Many methods achieve high recall, indicating a tendency to overestimate the support of the data distribution. In contrast, \dynamic achieves the highest precision, particularly on UD and US, indicating more accurate modeling of the target distributions. \temporal  achieves the next best precision, while other methods exhibit a trade-off of high recall but low precision.

\paragraph{MNIST.}
The TDs consist of digit classes 0 through 8. UDs are defined as mixtures of any two of these classes, while US corresponds to the unseen class 9.
Qualitatively, the trends observed in 2D Arcs carry over to MNIST. \static successfully models TDs but fails to generalize to UD and US. \temporal improves generalization to mixtures and unseen classes, producing reasonable samples across all settings. \dynamic again provides the most consistent performance, accurately capturing TD, UD, and US.
These trends are reflected in the quantitative metrics. \dynamic achieves the best overall performance, with the highest precision, recall, and FID scores across all splits, followed by  \temporal. Other methods tend to achieve reasonable recall but significantly lower precision and worse FID scores.
We  also use MNIST to perform ablations on the number of shots and the number of basis functions.
See Appendix \ref{app:ablation}.

\paragraph{ImageNet.}
Similar to MNIST, we use a random subset with 900 classes out of 1000 to form the TDs, and the rest are used for US. 
We use a latent Vision Transformer (ViT) \cite{vit, Peebles2022DiT} backbone and each baseline is trained for 150,000 training steps. 
Due to the expense of training, we only train one model and focus on key baselines. 
See Figure \ref{fig:imagenet_qual} and Table \ref{table:imagenet}.
Qualitatively, \conditional, \finetune, and \dynamic successfully capture TDs, albeit with minor artifacts. While \static and \temporal also model these distributions, their representations are comparatively limited here, likely because a memory-constrained batch size of 32 is too small to fully capture the class-specific context.
On US, \conditional unsurprisingly fails. \finetune's outputs lack key details, although the low frequency features look correct. \dynamic outputs the best images. 
These results are confirmed by the quantitative metrics: \dynamic has the best precision, recall, and FID. 
% \conditional is notably good for the TDs, but it cannot generalize. 

Overall, the experiments show that \dynamic achieve the best scores, followed by \temporal and then \static. 
This is in alignment with the ordering of expressivity, as discussed in Section \ref{sec:approach}.
Compared to other baselines, FP-FM has lower compute times than any method that requires finetuning, while paying a small cost relative to the conditional approach.
Overall, the FP-FM variants achieve competitive or superior performance to prior work, particularly on UDs and USs, while requiring lower compute than training-based approaches.

\begin{figure*}[!h]
\centering
\setlength{\tabcolsep}{0pt}
\renewcommand{\arraystretch}{0.0}

\begin{tabular}{C{0.05\textwidth} *{10}{C{0.085\textwidth}}}
\toprule
\textbf{Split} 
& \multicolumn{2}{c}{\textbf{Conditional}}
& \multicolumn{2}{c}{\textbf{Finetune}}
& \multicolumn{2}{c}{\textbf{Static}}
& \multicolumn{2}{c}{\textbf{Temporal}}
& \multicolumn{2}{c}{\textbf{Dynamic}} \\
\midrule

TD
& \qualimg[Koala][0.09][south west]{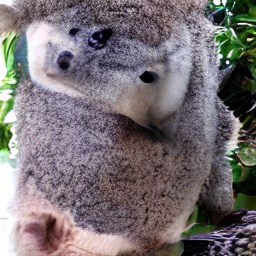}
& \qualimg[Catamaran][0.09][north west]{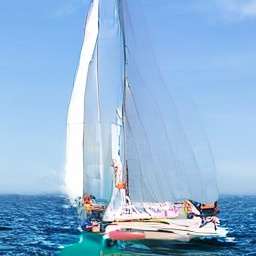}
& \qualimg[Chow][0.09][north west]{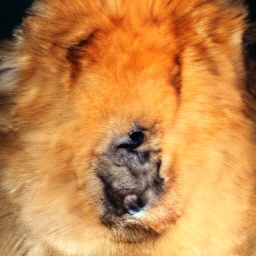}
& \qualimg[Planetarium][0.09][south west]{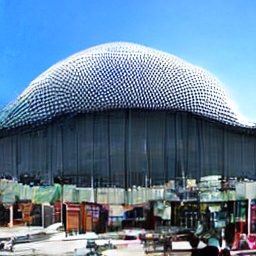}
& \qualimg[Egypt. Cat][0.09][north west]{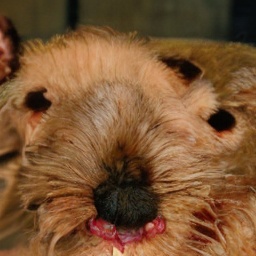}
& \qualimg[Swimtrunks][0.09][north west]{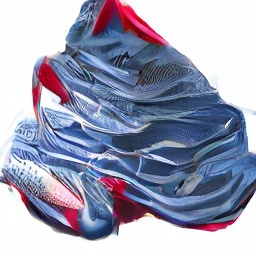}
& \qualimg[Chow][0.09][south west]{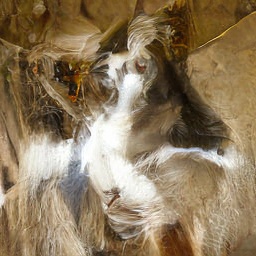}
& \qualimg[Lacewing][0.09][north west]{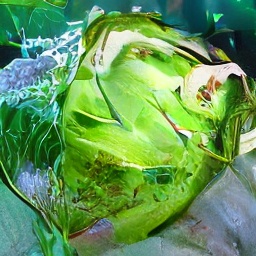}
& \qualimg[Box Turtle][0.09][south west]{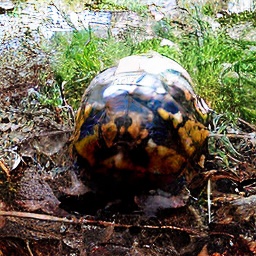}
& \qualimg[King Crab][0.09][north west]{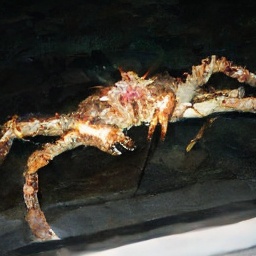} \\
US
& \qualimg[Microphone][0.09][south west]{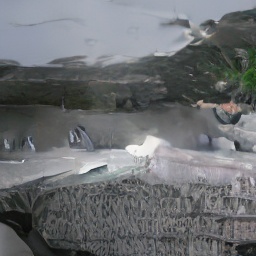}
& \qualimg[Flute][0.09][south west]{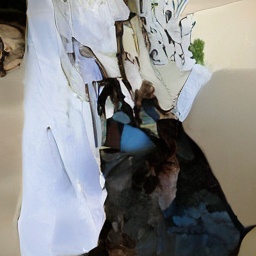}
& \qualimg[Pier][0.09][south west]{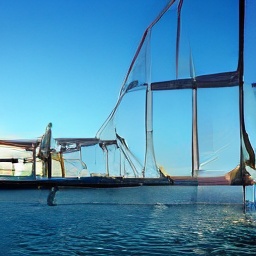}
& \qualimg[Scuba Diver][0.09][south west]{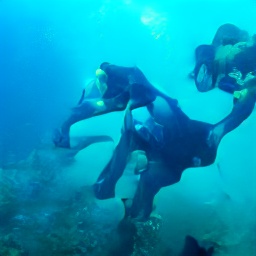}
& \qualimg[Envelope][0.09][south west]{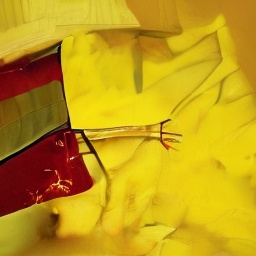}
& \qualimg[Camera][0.09][south west]{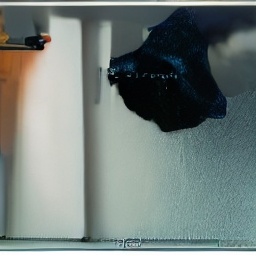}
& \qualimg[Pier][0.09][south west]{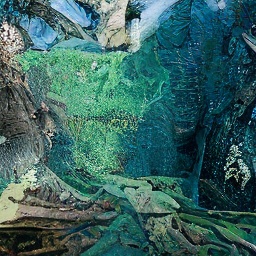}
& \qualimg[Ice Bear][0.09][south west]{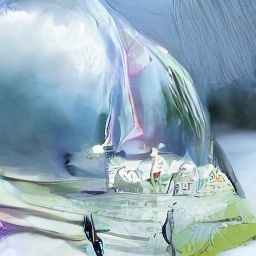}
& \qualimg[Macaque][0.09][south west]{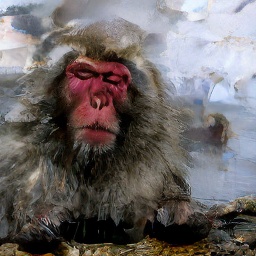}
& \qualimg[Scuba Diver][0.09][south west]{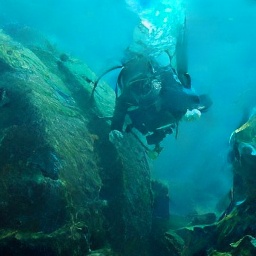} \\
% \midrule
\end{tabular}
\caption{
Qualitative comparison on ImageNet. Two samples are shown for each baseline and split. 
}
\label{fig:imagenet_qual}
\end{figure*}

\begin{table*}[t]
% \centering
\small
\setlength{\tabcolsep}{4pt}

\begin{minipage}[c]{0.6\textwidth}
\centering
\begin{tabular}{l|rr|rr|rr}
\toprule
& \multicolumn{2}{c|}{Precision $\uparrow$} & \multicolumn{2}{c|}{Recall $\uparrow$} & \multicolumn{2}{c}{FID $\downarrow$} \\
Algorithm & TD & US & TD & US & TD & US \\
\midrule
\conditional & 0.509 & 0.012 & 0.225 & 0.394 & 156.3 & 378.7 \\
\finetune (LoRA) & 0.489 & 0.344 & 0.143 & 0.251 & 162.6 & 193.8 \\
\static      & 0.025 & 0.017 & 0.377 & 0.361 & 341.1 & 331.4 \\
\temporal    & 0.015 & 0.013 & 0.128 & 0.153 & 366.7 & 360.3 \\
\dynamic     & \textbf{0.669} & \textbf{0.699} & \textbf{0.667} & \textbf{0.682} & \textbf{126.1} & \textbf{117.0} \\
\bottomrule
\end{tabular}
\end{minipage}
\hfill
\begin{minipage}[c]{0.35\textwidth}
\captionof{table}{\textbf{Imagenet.} Comparison of methods across TD and US settings. Arrows indicate whether higher ($\uparrow$) or lower ($\downarrow$) is better.} \label{table:imagenet}
\end{minipage}

\end{table*}

\section{Discussion} \label{sec:discussion}

One crucial consideration is the desired expressivity of the model.
% As mentioned in Section \ref{sec:approach}, the \dynamic can be thought of as similar to a data-driven approach except regulated by the learned basis functions.
% This means the generated samples \textit{may} be similar to the calibration dataset, depending on the dataset and the number of samples provided for calibration. 
% In certain settings, this may be undesirable (e.g., image generation), while in others, it may be ideal (e.g., world models for training reinforcement learning policies). 
% Thus, the expressivity of the model is a design choice that depends on the problem setting and the desired behavior. 
To that end, an interesting direction is to generalize our method to a spectrum of methods with controllable expressivity.
For example, if the coefficients are thought of as a function of state and time, then imposing different characteristics such as Lipschitz constraints control the expressivity of the model. 
In this sense, \static's coefficients are 0-Lipschitz with respect to state and time, while \dynamic's coefficients have no Lipschitz constraints. 
\temporal is 0-Lipschitz with respect to state but not time. 
Generalizing this method to a spectrum is an interesting direction, and we leave it to future work. 

\paragraph{Limitations.} \label{para:limitations}
While FP-FM is significantly more computationally efficient than finetuning, it is slightly more expensive than a \conditional model, as shown in the experiments. This tradeoff is empirically justified by improvements in UD and US performance. 
FP-FM also requires samples, rather than a conditioning variable. 
This assumption may or may not be reasonable, depending on the setting. 
Finally, performance improves with the number and quality of samples provided. In extremely low-sample regimes, the method may struggle to accurately capture the underlying manifold, and the required number of samples depends on the target distribution.

\paragraph{Broader Impact.} FP-FM may lower the barrier to producing highly targeted synthetic data, potentially enabling impersonation, data poisoning, or the amplification of biased or misleading content.
If an industry-grade model is made publicly available, standard security safeguards should be implemented to prevent malicious use.

\section{Related Work}

This work is a continuation of parametric generative modeling \cite{ddpm, scored-based}.
In particular, we build upon the  ODE viewpoint \cite{fm, rect-flow} implied by the continuity equation, which suggests we can transform an arbitrary noise distribution into a target distribution with an appropriate velocity field. 
From this viewpoint, generative modeling simplifies to defining an appropriate stochastic process and modeling it with a neural ODE \cite{node}. 
Our problem setting is closely related to conditional generative modeling. For example, class-conditional generative modeling uses a variable to represent the class \cite{conditional-norm-flows, classifierfree}. Many approaches use natural language as a conditioning variable as it is highly expressive \cite{dalle, imagen, stablediffusion, eDiff}. However, a natural language model requires a significant amount of training data, which may not always be available, and in some cases natural language is not sufficient to describe a distribution (e.g., describing a specific person in text is non-trivial). FP-FM differs from these prior works in that it does not require knowledge of the conditioning variable; It only requires samples.
Another series of works focus on updating the parameters of the model to fit a new distribution \cite{dreambooth}, such as finetuning token representations for transformer-based model \cite{vit}. Other approaches train classifiers and use the score to guide generation, or just use Langevin dynamics directly \cite{classifierguided, Langevin}. 
In contrast to these approaches, we do not require additional training to fit new distributions. 

A related line of work is meta-learning. Our approach is considered a form of meta learning \cite{maml}, where approximating the velocity field for a given distribution is the inner problem, and training the basis functions is the outer problem. 
Some approaches leverage learned latent representations to encode target functions along with learned networks to leverage these representations \cite{conditionalneuralprocesses, neuralstatiscian}, whereas we leverage least squares to compute representations.  
Other approaches also learn basis functions \cite{deepkernel, metalearningwithclosedformsolvers}, but these approaches leverage the Woodbury identity, meaning there least squares solve scales poorly with the amount of data. 
In contrast, we use basis functions in feature space rather than kernel space, and so our method scales well with the amount of data.

The closest work is FE-based neural ODEs \cite{fe-node}, which train neural ODE basis functions to model deterministic dynamical systems. The key difference is that FE neural ODEs model the state of a dynamical system at discrete time points, since the state derivatives are unobserved in their setting, whereas we are fitting the velocity field directly. 
Thus the training data assumptions and model outputs serve substantially different purposes between these two works.

\section{Conclusion}
% What we did
We introduced FP-FM, a flow matching algorithm that enables rapid adaptation to unseen target distributions.
By reframing velocity field estimation in terms of a distribution-weighted inner product, we showed how Hilbert spaces naturally integrate with the flow matching objective.
The three FP-FM variants progressively increase expressivity by varying the coefficient calculation, with more expressive models fitting new distributions more accurately. In our experiments, we observe a consistent improvement in precision, recall, and FID as expressivity increases, with \dynamic performing best overall, followed by \temporal and \static.
This opens several directions for future work, such as more fine-grained control of the expressivity.
Overall, FP-FM is a natural first step towards adapting generative models to unseen distributions.

% \begin{ack}
% redacted
% \end{ack}

\bibliographystyle{unsrtnat}
\bibliography{sample}

\newpage
\appendix

\section{Derivation of Theorem \ref{thm: sampling}}
\label{app: sampling}

In this section, we show the derivation for Theorem \ref{thm: sampling}, and additionally provide a diagram which makes the intuition clear. See Figure \ref{fig:visual_proof}. In the proof, we omit the $\iota$ superscript for conciseness. 
Note that we make the standard assumptions for this setting: $X_0$ and $X_1$ are independent, $p(x_0)$, $p(x_1)$ exists as densities with respect to the Lebesgue measure, $t<1$, the noise distribution is Gaussian, and the interpolation is linear. 

\begin{proof}
\begin{align}
\mathbb{E}_{X_1,X_0 \mid X_t = x_t}\!\left[X_1 - X_0 | X_t=x_t\right]  &= \int_{\mathcal{X} \times \mathcal{X}} (x_1 - x_0) p(x_1, x_0 | x_t) dx_1 dx_0 \\
    &= \int_{\mathcal{X} \times \mathcal{X}} (x_1 - x_0) \frac{p(x_t | x_1, x_0) p(x_0, x_1)}{p(x_t)} dx_1 dx_0 \\
    &= \int_{\mathcal{X} \times \mathcal{X}} (x_1 - x_0) \frac{p(x_t | x_1, x_0) p(x_0) p(x_1)}{p(x_t)} dx_1 dx_0 \\
    &= \int_{\mathcal{X} \times \mathcal{X}} (x_1 - x_0) \frac{\delta\left(x_t - \left(tx_1 + \left(1-t\right)x_0\right)\right) p(x_0) p(x_1)}{p(x_t)} dx_1 dx_0 
\intertext{
Fix $x_1$ and let $h(x_0) = x_t - \left(tx_1 + \left(1-t\right)x_0\right)$. Then the (only) root $x_0^*=\frac{x_t - tx_1}{1-t}$. Using $\int f(x) \delta\left(h(x)\right) dx = \frac{f(x^*)}{|\text{det}J_h(x^*)|}$, and the fact that $|\text{det}J_h(x_0^*)|=|\text{det}(-(1-t)I)|=|(1-t)^n|$ for our specific choice of $h$,
}
    &= \int_{\mathcal{X}} (x_1 - x^*_0) \frac{ \frac{1}{|\text{det}J_h(x^*_0)|}p(x^*_0) p(x_1)}{p(x_t)} dx_1  \\
    &= \int_{\mathcal{X}} (x_1 - x^*_0) \frac{ \frac{1}{|(1-t)^n|}p(x^*_0) p(x_1)}{p(x_t)} dx_1 \label{eq:17}
\intertext{
Using the same trick as before on $p(x_t)$,
}
    p(x_t) &= \int_{\mathcal{X} \times \mathcal{X}} \delta \left(x_t - \left (tx_1 + \left(1-t\right)x_0\right)\right)p(x_0)p(x_1) dx_1 dx_0 \\
    &= \int_\mathcal{X} \frac{1}{|(1-t)^n|}p(x_0^*)p(x_1)  dx_1.
\intertext{
Then substituting back into \eqref{eq:17},
}
    &= \frac{\int_{\mathcal{X}} (x_1 - x^*_0) \frac{1}{|(1-t)^n|}p(x^*_0) p(x_1) dx_1} {\int_\mathcal{X}  \frac{1}{|(1-t)^n|}p(x_0^*)p(x_1) dx_1} \\
    &= \frac{\int_{\mathcal{X}} (x_1 - x^*_0) p(x^*_0) p(x_1) dx_1} {\int_\mathcal{X}  p(x_0^*)p(x_1) dx_1} \\
    &=E_{X_1}[(X_1 - X^*_0) \frac{ p(X^*_0)}{E_{X_1}[p(X^*_0)]}]
\end{align}
\end{proof}

\paragraph{Sampling Procedure} To approximate this expectation for a given $x^\iota_t$, we first sample a set of $x^\iota_1$'s. Typically, this consists of the provided samples from the new distribution. Then, for each $x_1^{\iota, i}$, compute the corresponding initial state $x_0^{\iota, i}$ and then $w^{\iota,i}=p(x_0^{\iota, i})$. Normalize the $w^{\iota,i}$'s to account for $E_{X_1}[p(X^*_0)]$ with $\Tilde{w}^{\iota,j} = w^{\iota,j} / \sum_{i=1}^m w^{\iota,i}$. Lastly, compute the empirical mean weighted by $\Tilde{w}^{\iota,i}$, 
\begin{equation}
    \mathbb{E}_{X_1,X_0 \mid X_t = x}\!\left[X_1 - X_0 | X_t=x_t\right]  \approx \sum_{i=1}^m (x_1^{\iota, i} - x_0^{\iota, i}) \Tilde{w}^{\iota,i}
\end{equation}

\begin{figure}
    \centering
    \includegraphics[width=0.6\linewidth]{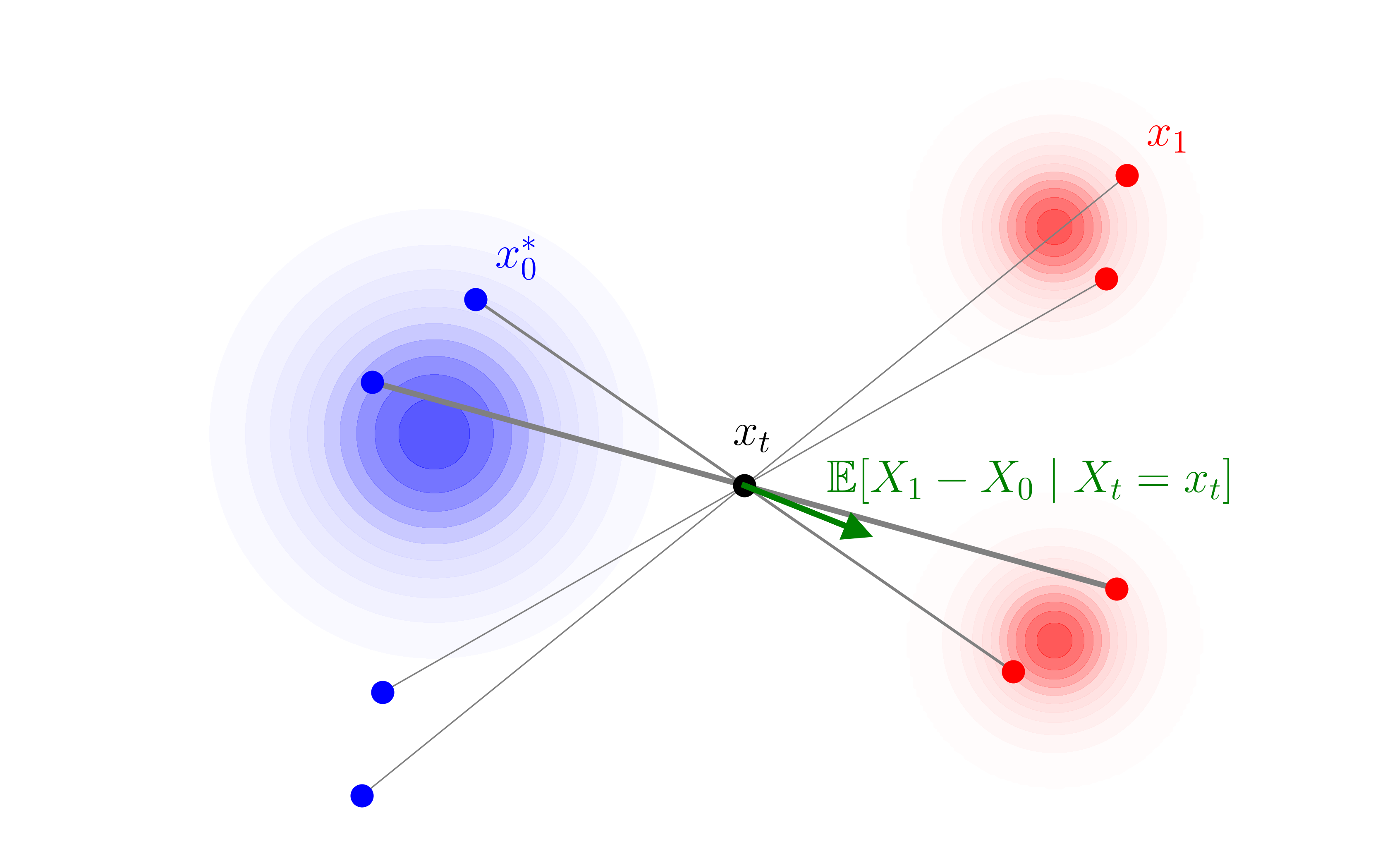}
    \caption{\textbf{Visualization of Theorem \ref{thm: sampling}.} Consider a noise distribution (shown in blue) and a mixture of Gaussians target distribution (shown in red). For any given $x_t$ (black point) and set of $x_1$'s (red points), we may compute the corresponding $x_0^*$'s using the equation $x^*_0 =\frac{x_t - tx_1}{1-t} $. However, not all $x_0^*$'s are probable. We represent the likelihood of $x_0^*, x_1$ given $x_t$ via the thickness of the line connecting them. Clearly, some pairs are more likely. Therefore, the expected velocity field $\mathbb{E}[X_1-X_0|X_t]$ (green) will appear to align with the thicker of these lines. This is exactly why we need to consider the probability of $x_0^*$ when computing the expected velocity field.   }
    \label{fig:visual_proof}
\end{figure}

\section{Function-Distribution Pairs} \label{app:function-distribution-pairs}

We begin by generalizing this formulation. Let $\mathcal{F}=\{f: \mathcal{X} \to \mathbb{R}^n\ | f \text{ measurable}\}$, where compact $\mathcal{X}$ is the input space and $\mathbb{R}^n$ is the output space. Suppose we observe a finite subset $\mathcal{T} = \{f^i\}_{i=1}^m \subset \mathcal{F}$,  and a corresponding set of densities $\mathcal{P}=\{p^i_X: \mathcal{X} \to [0,\infty)| i \in 1...m\}$. 
Each density $p^i_X$ induces a Hilbert space $\mathcal{H}^i=L^2(p_X^i)$ with inner product $\langle f, g\rangle_i= \int_\mathcal{X} f(x)^\top g(x) p_X^i(x)dx$, and assume $f^i \in \mathcal{H}^i.$
Note that for $j \neq i$, it is possible that $f^j \notin \mathcal{H}^i$. In other words, our set of functions may or may not exist in each other's function spaces.

As is typical for the function encoder, our goal is to learn a set of functions $\{g^j\}_{j=1}^k$ such that for all $i$, $f^i \approx \sum_{j=1}^k c^{i,j} g^j$ where $c^i \in \mathbb{R}^k$. This allows us to approximate $f^i$ with a linear combination of our basis functions. As is typical, we compute the coefficients using least squares,

\begin{equation} \label{eqn: app-ls}
 \mathbb{R}^k \ni c^i = \begin{bmatrix}
\langle g^1, g^1 \rangle_{i} & \hdots & \langle g^1, g^k \rangle_{i} \\
\vdots & \ddots & \vdots \\
\langle g^k, g^1 \rangle_{i} & \hdots & \langle g^k, g^k \rangle_{i} \\
\end{bmatrix}^{-1}
\begin{bmatrix}
\langle f^i, g^1 \rangle_{i} \\
\vdots \\
\langle f^i, g^k \rangle_{i} \\
\end{bmatrix}.
\end{equation}

This requires $\forall i,j       g^j \in \mathcal{H}^i$, i.e., the basis functions must be members of all Hilbert spaces. However, since we are approximating the basis functions with neural networks, they are continuous. Since $\mathcal{X}$ is compact and $g^j$ is continuous, then it follows from the extreme value theorem that each $g^j$ is bounded by some maximum value $M_j$. Therefore,

\[ ||g^j||^2_{L^2(p_X^i)} = \int_\mathcal{X} ||g^j(x)||_2^2 p_X^i(x)dx  \leq M_j^2\int_X p_X^i(x)dx = M_j^2<  \infty\]

Therefore, $g^j \in \mathcal{H}^i$. In other words, the basis functions are well-defined regardless of the density function. 
Thus, we only need to train the basis functions to span $\{f^i\}$, and then we may use them to approximate any such function $f^i$. Note that by using Equation \eqref{eqn: app-ls}, we implicitly assume we want to minimize the approximation error on points that are likely under $p_X^i$, but assumption is quite natural given the correspondence between $f^i$ and $p^i_X$. 

One interesting result of this formulation is that two functions $f^i, f^j$ may have the same representation, $c^i = c^j$, even if they differ on a set with measure greater than 0. For example, suppose the support of $p_X^i$ and $p_X^j$ are disjoint; Then it is easy to see how they may have the same coefficient representation. This implies we use the same approximation for both functions because this approximation is the most accurate when measured via the corresponding inner product. 
In terms of flow matching, if the representation for two target distributions were the same, then they would have the same velocity field, and therefore the same sampling distribution. However, since every $X_t^\iota$ has the same noise distribution, the supports of the distributions $p_{X_t^\iota}^\iota$ overlap when $t=0$, which helps to distinguish between the velocity fields. We have not observed two separate velocity fields having the same coefficients in practice.

\section{Baselines} \label{app:baselines}

\begin{figure}
    \centering
    \includegraphics[width=0.6\linewidth,trim=50pt 0pt 20pt 0pt, clip]{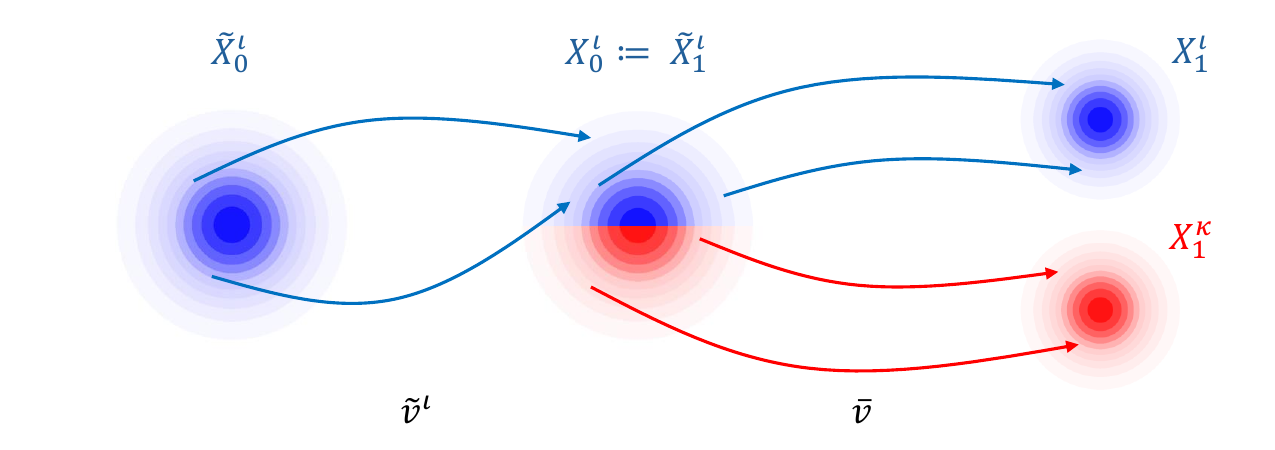}
    \caption{\textbf{Visualization of the Distribution-Guided Model.} Suppose we have trained a \texttt{Unconditional} model  $\Bar{v}$ that maps a Normal distribution (middle) to a mixture of two Gaussians (right). The \texttt{Unconditional} model maps the top half of its noise distribution to the upper target Gaussian, and the bottom half to the lower target Gaussian. Then, we are interested in generating samples only from the upper Gaussian distribution (i.e., $X_1^\iota$). To do so, train a second velocity field $\Tilde{v}^\iota$ which itself maps from a Normal distribution (left) to the upper half of the Normal distribution (middle, blue). Putting these together, a sample is first transitioned from $\Tilde{X}_0^\iota$ to $\Tilde{X}_1^\iota$ via $\Tilde{v}^\iota$, then transitioned from $X_0^\iota := \Tilde{X}_1^\iota$ to $X_0^\iota$ via $\Bar{v}$. This process yields samples from the target distribution.} 
    \label{fig:distribution-guided-vis}
\end{figure}

\paragraph{Unconditional} is a standard flow matching model trained on all of the training distributions as if they were a single distribution (e.g., a mixture). This acts as naive baseline, and as a starting point for some of the other algorithms. For obvious reasons, this baseline should perform poorly at all of TD, UD, US. 

\paragraph{Conditional} is a model provided with a conditioning variable to describe the target distribution, but is otherwise the same as an \texttt{Unconditional} model. For example, for \textit{2D-Arcs}, the conditioning variable is the angle of the center of the arc. While this allows the model to accurate learn the training distributions, it is unclear how this conditioning variable could generalize to a mixture of arcs, or a spiral. For \textit{MNIST}, the conditioning variable is a 10-dimensional vector with 1's for classes within the mixture. Although this conditioning variable can describe mixtures (with multiple ones), it was not trained to do so and so it fails to generalize. Likewise, although it has a one-hot encoding of 9, it has not seen this during training and so it fails to generalize. \textit{ImageNet} uses a similar strategy as MNIST, with classes encoded into a vector. We expect this model to perform well on TD's, but to fail on UD's and US's. 

\paragraph{Classifier-Guided} starts with a pre-trained, \texttt{Unconditional} model $\Bar{v}$. Then, it trains a classifier $\hat{p^\iota}: \mathcal{X} \times [0,1] \to [0,1]$ to label if a given state-time pair $(x_t,t)$ belongs to the target distribution or not. The score of this classifier is used to guide the \texttt{Unconditional} model towards the desired distribution,
\[ v^\iota(x_t,t) \approx \Bar{v}(x_t,t) + \alpha \nabla_{x_t} \text{log}\hat{p}^\iota(x_t,t), \]
where $\alpha$ is a hyper parameter. 
The intuition behind this idea is that the \texttt{Unconditional} model can generate samples from any of the training distributions, and so we only need to guide the model towards the target distribution using the gradient of the log density as estimated by our model. The main challenge with this method is choosing $\alpha$. To the best of our knowledge, there is not a consistent way of choosing $\alpha$, and it needs to be tuned per dataset and potentially even per distribution. Also, this requires training a classifier per each target distribution, which is similar in expense to finetuning. 
We expect this model to perform well on TD and US. However, it will typically perform poorly on US.

\paragraph{Distribution-Guided} likewise starts with a pretrained \texttt{Unconditional} model $\Bar{v}$. It then trains a secondary model $\Tilde{v}^\iota$ to produce noise inputs that, when passed through the pretrained model, generate samples from the target distribution.
Generating a sample has the following form:
\[\Tilde{X}^\iota_0 \sim \mathcal{N}(0, I)\]
\[ \Tilde{X}^\iota_t = \Tilde{X}_0^\iota + \int_0^t \Tilde{v}^\iota (\Tilde{X}_\tau^\iota) d\tau \]
\[ X_0^\iota := \Tilde{X}_1^\iota\]
\[ {X}^\iota_t = {X}_0^\iota + \int_0^t \Bar{v} ({X}_\tau^\iota) d\tau \]

Similar to \texttt{Classifier-Guided}, the idea is that the \texttt{Unconditional} model can already generate any sample in the training distributions. Thus, we only need to figure out which distribution of inputs to the \texttt{Unconditional} model will yield the target distribution. Then, we train a separate model to generate this starting distribution. See Figure \ref{fig:distribution-guided-vis}.

While this approach is mathematically elegant, it has numerous practical downsides. First, to train the distribution-specific model, we must first compute the inputs to the \texttt{Unconditional} model that generate the target distribution. This involves starting with samples from the target distribution, and integrating backwards in time to recover the initial states. This backwards integration can be numerically unstable. Second, we must train an entirely separate flow matching model, which is expensive. Third, generation is now twice as expensive, since we have two models instead of one.
We expect this model to succeed on TD and UD. It should fail on US, since it can only generate samples that the base \texttt{Unconditional} model is capable of generating.

\paragraph{Finetune} is the standard approach. Given a pretrained unconditional model, finetune it on the data provided for the target distribution using the standard flow matching method. This approach can in theory perform well at all of TD, UD, US. However, since we are training then entire model, it may require many samples and a lot of compute to do so. 

\paragraph{Implementation Details. } 

% \paragraph{Implementation Details.} 
Across all methods, we align training settings as closely as possible, including architectures, batch sizes, and optimizers.
We use a Adam or AdamW optimizer with a learning rate of 1e-3 or 5e-4 and constant or cosine decay learning rate schedule. 
The architecture is dataset dependent: MLP for 2D Arcs, UNet \cite{unet} for MNIST, and latent ViT \cite{vit, stablediffusion} for ImageNet.  For ImageNet, we use a pretrained VAE latent space. 
2D Arcs and MNIST experiments are run on a desktop machine with a 3080. The ImageNet experiments are run on a cluster of 8 RTX PRO 6000 Blackwell GPUs. 
2D Arcs experiments run in a few minutes, depending on the algorithm. MNIST experiments take a few hours. 
ImageNet experiments take about a day per algorithm.

For all approaches involving additional training, we train for 1000 gradient steps using the Adam optimizer with a learning rate of $1e-3$. For \dg, we use the same architecture as the \unconditional model. For \cg, we use the same architecture as the \unconditional model except slightly modified to output probabilities. We train the classifier using cross entropy with positive samples from $X_t^\iota$ and negative samples from $X_t^\kappa$ for $\kappa \neq \iota$. \finetune uses full finetuning for 2D Arcs and MNIST, and LoRA for ImageNet due to memory constraints. 

ImageNet is significantly more challenging than the other datasets due to the high-dimensional and diverse nature of its images. 
We model each dimension of the latent as a separate function space, which slightly increases the expressivity of the coefficients. 
We also use one additional function encoder technique for ImageNet, which improves performance:

\paragraph{Guided FP-FM}
As introduced in \cite{fe}, the \textit{residuals method} of function encoders represents a function as 
$f = \Bar{f} + \sum_{i=1}^k c^ig^i,$
where $\Bar{f}$ is a neural net trained to model the average function in the training set. Therefore, the basis functions correct the error of mean function. We may leverage a similar technique in the flow matching setting, where $\Bar{f}$ fits the mean velocity field, and the basis functions \textit{guide} this velocity field towards a specific distribution, similar to classifier-free guidance.

\section{Algorithms} \label{app:algs}

In this section, we provide the algorithm for each of our methods. Note that they largely follow the algorithm in \cite{fe}. The main difference between them is the sampling procedure. 

\begin{algorithm}[!h]
\caption{\static}
\label{alg: cfe}
\begin{algorithmic}
\STATE \textbf{given} datasets $\{ \mathcal{D}^\iota\}_{\iota \in \mathcal{T} }$, learning rate $\alpha$
\STATE Initialize basis $\lbrace g_{1}, \ldots, g_{k} \rbrace$ with parameters $\theta$
\WHILE{not converged}
\STATE   \algcomment{Initialize loss for gradient accumulation.}
\STATE $L \gets 0$
\STATE   \algcomment{For every distribution in the training set.}
\FORALL{$\iota \in \mathcal{T}$}
    \STATE   \algcomment{Define the samples.}
   \STATE $\{x_1^{\iota,i}\}_{i=1}^m \gets \mathcal{D}^\iota$
   \STATE $\{x_0^{\iota,i}\}_{i=1}^m \sim \mathcal{N}(0, I)$
   \STATE $\{t^i\}_{i=1}^m \sim \text{Unif}([0,1])$
   \STATE   \algcomment{Standard $X_t$ definition.}
   \STATE $x_t^{\iota,i} \gets (1-t^i) x_0^{\iota,i} + t^i x_1^{\iota,i}   $
   \STATE   \algcomment{Compute coefficients. The inner product is approximated using the samples.}
    \STATE $c^\iota \gets 
    \begin{bmatrix}
    \langle g^1, g^1 \rangle_{p_{t,X_t^\iota}} & \hdots & \langle g^1, g^k \rangle_{p_{t,X_t^\iota}} \\
    \vdots & \ddots & \vdots \\
    \langle g^k, g^1 \rangle_{p_{t,X_t^\iota}} & \hdots & \langle g^k, g^k \rangle_{p_{t,X_t^\iota}}
    \end{bmatrix}^{-1}
    \begin{bmatrix}
    \langle v^\iota, g^1 \rangle_{p_{t,X_t^\iota}} \\
    \vdots \\
    \langle v^\iota, g^k \rangle_{p_{t,X_t^\iota}}
    \end{bmatrix}
    $
    \STATE \algcomment{Define the approximated velocity field.}
    \STATE $\Hat{v}^\iota(\cdot, \cdot) := \sum_{i=1}^k c^{\iota, i} g^i(\cdot, \cdot)$

    \STATE \algcomment{Gradient accumulation. The norm is again approximated using the samples.}
    \STATE $L \gets L + \lVert v^\iota - \Hat{v}^\iota \rVert^{2}_{p_{t,X_t^\iota}}$  
\ENDFOR

\algcomment{Gradient descent.}
\STATE $\theta \gets \theta - \alpha \nabla_\theta (L)$ 
\ENDWHILE
\RETURN $\lbrace g_{1}, \ldots, g_{k} \rbrace$
\end{algorithmic}
\end{algorithm}

\begin{algorithm}[!h]
\caption{\temporal}
\label{alg: tfe}
\begin{algorithmic}
\STATE \textbf{given} datasets $\{ \mathcal{D}^\iota\}_{\iota \in \mathcal{T} }$, learning rate $\alpha$
\STATE Initialize basis $\lbrace g_{1}, \ldots, g_{k} \rbrace$ with parameters $\theta$
\WHILE{not converged}
\STATE   \algcomment{Initialize loss for gradient accumulation.}
\STATE $L \gets 0$
\STATE   \algcomment{For every distribution in the training set.}
\FORALL{$\iota \in \mathcal{T}$}
    \STATE \algcomment{Important: Sample a single time t}
    \STATE $t \sim \text{Unif}([0,1])$ 
    \STATE  \algcomment{Define the samples.}
   \STATE $\{x_1^{\iota,i}\}_{i=1}^m \gets \mathcal{D}^\iota$
   \STATE $\{x_0^{\iota,i}\}_{i=1}^m \sim \mathcal{N}(0, I)$
   
   \STATE   \algcomment{Standard $X_t$ definition.}
   \STATE $x_t^{\iota,i} \gets (1-t) x_0^{\iota,i} + t x_1^{\iota,i}$   
   \STATE   \algcomment{Compute coefficients. The inner product is approximated using the samples.}
   \STATE $c^\iota(t) \gets \begin{bmatrix}
\langle g^1_t, g^1_t \rangle_{p_{X_t^\iota|t}} & \hdots & \langle g^1_t, g^k_t \rangle_{p_{X_t^\iota|t}} \\
\vdots & \ddots & \vdots \\
\langle g^k_t, g^1_t \rangle_{p_{X_t^\iota|t}} & \hdots & \langle g^k_t, g^k_t \rangle_{p_{X_t^\iota|t}} \\
\end{bmatrix}^{-1}
\begin{bmatrix}
\langle v^\iota, g^1_t \rangle_{p_{X_t^\iota|t}} \\
\vdots \\
\langle v^\iota, g^k_t \rangle_{p_{X_t^\iota|t}} \\
\end{bmatrix}$
    \STATE   \algcomment{Define the approximated velocity field.}
    \STATE $\Hat{v}_t^\iota(\cdot) := \sum_{i=1}^k c^{\iota, i}(t) g^i_t(\cdot)$

    \STATE \algcomment{Gradient accumulation. The norm is again approximated using the samples.}
    \STATE $L \gets L + \lVert v^\iota_t - \Hat{v}_t^\iota \rVert^{2}_{p_{X_t^\iota|t}}$  
\ENDFOR

\algcomment{Gradient descent.}
\STATE $\theta \gets \theta - \alpha \nabla_\theta (L)$ 
\ENDWHILE
\RETURN $\lbrace g_{1}, \ldots, g_{k} \rbrace$
\end{algorithmic}
\end{algorithm}

\begin{algorithm}[!h]
\caption{\dynamic }
\label{alg: stfe}
\begin{algorithmic}
\STATE \textbf{given} datasets $\{ \mathcal{D}^\iota\}_{\iota \in \mathcal{T} }$, learning rate $\alpha$
\STATE Initialize basis $\lbrace g_{1}, \ldots, g_{k} \rbrace$ with parameters $\theta$
\WHILE{not converged}
\STATE   \algcomment{Initialize loss for gradient accumulation.}
\STATE $L \gets 0$
\STATE   \algcomment{For every distribution in the training set.}
\FORALL{$\iota \in \mathcal{T}$}
    \STATE   \algcomment{Define the samples.}
   \STATE $\{x_1^{\iota,i}\}_{i=1}^m \gets \mathcal{D}^\iota$
   \STATE $\{x_0^{\iota,i}\}_{i=1}^m \sim \mathcal{N}(0, I)$
   \STATE $\{t^i\}_{i=1}^m \sim \text{Unif}([0,1])$
   \STATE   \algcomment{Standard $X_t$ definition.}
   \STATE$ x_t^{\iota,i} \gets (1-t^i) x_0^{\iota,i} + t^i x_1^{\iota,i}   $
   \STATE \algcomment{Important: In the following for loop, we compute the noise samples from $x_t$ and $x_0$.}
   \STATE \algcomment{Therefore we delete the current noise samples to make this explicit.}
   \STATE Delete $\{x_0^{\iota,i}\}_{i=1}^m$
   \STATE \algcomment{For every state-time.}
   \FORALL{$(x_t,t) \in \{(x_t^{\iota,j}, t^j)\}_{j=1}^m $}
    \STATE \algcomment{Compute $x_0^*$ as in Theorem \ref{thm: sampling}}
    \STATE \algcomment{For each $x_t$, compute $x_0^{\iota,i}$ for all $i$, meaning we have a set $\{x_0^{\iota,i}\}_{i=1}^m$}
    \STATE  $x_0^{\iota,i} \gets \frac{x_t - t x_1^{\iota, i}}{1-t}   $
    \STATE   \algcomment{Compute coefficients. The inner product is approximated using the samples.}
    \STATE \algcomment{See Appendix \ref{thm: sampling}.}
       \STATE $c^\iota(x_t,t) \gets \begin{bmatrix}
    \langle g^1, g^1 \rangle_{p_{t,X_t^\iota|t,X_t}} & \hdots & \langle g^1, g^k \rangle_{p_{t,X_t^\iota|t,X_t}} \\
    \vdots & \ddots & \vdots \\
    \langle g^k, g^1 \rangle_{p_{t,X_t^\iota|t,X_t}} & \hdots & \langle g^k, g^k \rangle_{p_{t,X_t^\iota|t,X_t}} \\
    \end{bmatrix}^{-1}
    \begin{bmatrix}
    \langle v^\iota, g^1 \rangle_{p_{t,X_t^\iota|t,X_t}} \\
    \vdots \\
    \langle v^\iota, g^k \rangle_{p_{t,X_t^\iota|t,X_t}} \\
    \end{bmatrix}$
        \STATE   \algcomment{Define the approximated velocity field.}
        \STATE $\Hat{v}^\iota(x_t,t) := \sum_{i=1}^k c^{\iota, i}(x_t,t) g^i(x_t,t)$
    
        \STATE \algcomment{Gradient accumulation. The norm is again approximated using the samples.}
        \STATE $L \gets L + \lVert v^\iota - \Hat{v}^\iota \rVert^{2}_{p_{t,X_t^\iota|t,X_t}}$  
    
   \ENDFOR

\ENDFOR

\algcomment{Gradient descent.}
\STATE $\theta \gets \theta - \alpha \nabla_\theta (L)$ 
\ENDWHILE
\RETURN $\lbrace g_{1}, \ldots, g_{k} \rbrace$
\end{algorithmic}
\end{algorithm}

\clearpage

\section{Full Results} \label{app:fullresults}
\begin{table*}[h]
\centering
\small
\setlength{\tabcolsep}{4pt}
\begin{tabular}{l|rrr|rrr|r}
\toprule
& \multicolumn{3}{c|}{Precision $\uparrow$} & \multicolumn{3}{c|}{Recall $\uparrow$} & Time (s) $\downarrow$ \\
Algorithm & TD & UD & US & TD & UD & US & \\
\midrule
\unconditional & 0.075 & 0.194 & 0.001 & 0.978 & 0.979 & 0.029 & 0.07 \\
 & \scriptsize $\pm$ 0.017 & \scriptsize $\pm$ 0.046 & \scriptsize $\pm$ 0.001 & \scriptsize $\pm$ 0.011 & \scriptsize $\pm$ 0.010 & \scriptsize $\pm$ 0.007 & \scriptsize $\pm$ 0.00 \\
\conditional & 0.257 & 0.387 & 0.003 & 0.985 & 0.662 & 0.035 & 0.07 \\
 & \scriptsize $\pm$ 0.030 & \scriptsize $\pm$ 0.045 & \scriptsize $\pm$ 0.001 & \scriptsize $\pm$ 0.008 & \scriptsize $\pm$ 0.022 & \scriptsize $\pm$ 0.012 & \scriptsize $\pm$ 0.00 \\
\cg & 0.272 & 0.465 & 0.004 & 0.810 & 0.824 & 0.097 & 7.20 \\
 & \scriptsize $\pm$ 0.047 & \scriptsize $\pm$ 0.049 & \scriptsize $\pm$ 0.001 & \scriptsize $\pm$ 0.024 & \scriptsize $\pm$ 0.021 & \scriptsize $\pm$ 0.088 & \scriptsize $\pm$ 0.21 \\
\dg & 0.227 & 0.250 & 0.178 & 0.990 & 0.989 & 0.989 & 14.48 \\
 & \scriptsize $\pm$ 0.016 & \scriptsize $\pm$ 0.012 & \scriptsize $\pm$ 0.012 & \scriptsize $\pm$ 0.001 & \scriptsize $\pm$ 0.003 & \scriptsize $\pm$ 0.002 & \scriptsize $\pm$ 0.10 \\
\finetune & 0.213 & 0.255 & 0.176 & 0.987 & 0.988 & 0.987 & 4.93 \\
 & \scriptsize $\pm$ 0.013 & \scriptsize $\pm$ 0.013 & \scriptsize $\pm$ 0.012 & \scriptsize $\pm$ 0.003 & \scriptsize $\pm$ 0.002 & \scriptsize $\pm$ 0.005 & \scriptsize $\pm$ 0.22 \\
\static & 0.221 & 0.305 & 0.058 & 0.832 & 0.799 & 0.921 & 0.09 \\
 & \scriptsize $\pm$ 0.105 & \scriptsize $\pm$ 0.033 & \scriptsize $\pm$ 0.005 & \scriptsize $\pm$ 0.075 & \scriptsize $\pm$ 0.116 & \scriptsize $\pm$ 0.036 & \scriptsize $\pm$ 0.00 \\
\temporal & 0.506 & 0.690 & 0.488 & 0.980 & 0.979 & 0.988 & 0.98 \\
 & \scriptsize $\pm$ 0.033 & \scriptsize $\pm$ 0.020 & \scriptsize $\pm$ 0.007 & \scriptsize $\pm$ 0.006 & \scriptsize $\pm$ 0.003 & \scriptsize $\pm$ 0.002 & \scriptsize $\pm$ 0.01 \\
\dynamic & \textbf{0.931} & \textbf{0.976} & \textbf{0.734} & \textbf{0.965} & \textbf{0.962} & \textbf{0.890} & 11.24 \\
 & \textbf{\scriptsize $\pm$ 0.005} & \textbf{\scriptsize $\pm$ 0.004} & \textbf{\scriptsize $\pm$ 0.007} & \textbf{\scriptsize $\pm$ 0.003} & \textbf{\scriptsize $\pm$ 0.003} & \textbf{\scriptsize $\pm$ 0.004} & \scriptsize $\pm$ 0.03 \\
\bottomrule
\end{tabular}
\caption{\textbf{2D Arc.} Comparison of methods across TD, UD, and US settings. Arrows indicate whether higher ($\uparrow$) or lower ($\downarrow$) is better. Results are meaned over 5 seeds. Standard deviation is shown after the $\pm$ symbol.}
\end{table*}
\begin{table*}[h]
\centering
\tiny
\setlength{\tabcolsep}{4pt}
\begin{tabular}{l|rrr|rrr|rrr|r}
\toprule
& \multicolumn{3}{c|}{Precision $\uparrow$} & \multicolumn{3}{c|}{Recall $\uparrow$} & \multicolumn{3}{c|}{FID $\downarrow$} & Time\\
Algorithm & TD & UD & US & TD & UD & US & TD & UD & US & \\
\midrule
\unconditional & 0.296 & 0.442 & 0.306 & 0.698 & 0.698 & 0.694 & 100.30 & 93.13 & 98.79 & 9.93 \\
 & \scriptsize $\pm$ 0.151 & \scriptsize $\pm$ 0.223 & \scriptsize $\pm$ 0.170 & \scriptsize $\pm$ 0.349 & \scriptsize $\pm$ 0.349 & \scriptsize $\pm$ 0.348 & \scriptsize $\pm$ 125.45 & \scriptsize $\pm$ 127.96 & \scriptsize $\pm$ 139.09 & \scriptsize $\pm$ 0.05 \\
\conditional & 0.878 & 0.762 & 0.343 & 0.867 & 0.777 & 0.783 & 13.22 & 22.14 & 31.56 & 10.32 \\
 & \scriptsize $\pm$ 0.014 & \scriptsize $\pm$ 0.037 & \scriptsize $\pm$ 0.132 & \scriptsize $\pm$ 0.015 & \scriptsize $\pm$ 0.027 & \scriptsize $\pm$ 0.097 & \scriptsize $\pm$ 0.54 & \scriptsize $\pm$ 2.06 & \scriptsize $\pm$ 2.81 & \scriptsize $\pm$ 0.04 \\
\cg & 0.562 & 0.641 & 0.403 & 0.732 & 0.728 & 0.686 & 83.61 & 84.30 & 91.66 & 28.16 \\
 & \scriptsize $\pm$ 0.282 & \scriptsize $\pm$ 0.321 & \scriptsize $\pm$ 0.205 & \scriptsize $\pm$ 0.366 & \scriptsize $\pm$ 0.364 & \scriptsize $\pm$ 0.353 & \scriptsize $\pm$ 132.85 & \scriptsize $\pm$ 133.88 & \scriptsize $\pm$ 140.88 & \scriptsize $\pm$ 0.07 \\
\dg & 0.374 & 0.371 & 0.160 & 0.408 & 0.445 & 0.229 & 67.62 & 53.04 & 192.79 & 604.84 \\
 & \scriptsize $\pm$ 0.039 & \scriptsize $\pm$ 0.061 & \scriptsize $\pm$ 0.192 & \scriptsize $\pm$ 0.048 & \scriptsize $\pm$ 0.066 & \scriptsize $\pm$ 0.282 & \scriptsize $\pm$ 44.63 & \scriptsize $\pm$ 25.60 & \scriptsize $\pm$ 177.95 & \scriptsize $\pm$ 3.02 \\
\finetune & 0.376 & 0.408 & 0.340 & 0.377 & 0.383 & 0.557 & 40.83 & 40.95 & 29.24 & 209.35 \\
 & \scriptsize $\pm$ 0.068 & \scriptsize $\pm$ 0.072 & \scriptsize $\pm$ 0.249 & \scriptsize $\pm$ 0.083 & \scriptsize $\pm$ 0.040 & \scriptsize $\pm$ 0.206 & \scriptsize $\pm$ 16.99 & \scriptsize $\pm$ 3.55 & \scriptsize $\pm$ 14.03 & \scriptsize $\pm$ 3.36 \\
\static & 0.798 & 0.765 & 0.571 & 0.877 & 0.851 & 0.869 & 15.72 & 24.07 & 18.81 & 10.50 \\
 & \scriptsize $\pm$ 0.005 & \scriptsize $\pm$ 0.022 & \scriptsize $\pm$ 0.108 & \scriptsize $\pm$ 0.022 & \scriptsize $\pm$ 0.047 & \scriptsize $\pm$ 0.044 & \scriptsize $\pm$ 1.04 & \scriptsize $\pm$ 4.09 & \scriptsize $\pm$ 1.18 & \scriptsize $\pm$ 0.04 \\
\temporal & 0.787 & 0.778 & 0.589 & 0.885 & 0.894 & 0.889 & 14.13 & 17.64 & 17.21 & 20.61 \\
 & \scriptsize $\pm$ 0.019 & \scriptsize $\pm$ 0.009 & \scriptsize $\pm$ 0.115 & \scriptsize $\pm$ 0.012 & \scriptsize $\pm$ 0.021 & \scriptsize $\pm$ 0.034 & \scriptsize $\pm$ 0.56 & \scriptsize $\pm$ 1.03 & \scriptsize $\pm$ 0.85 & \scriptsize $\pm$ 0.22 \\
\dynamic & \textbf{0.935} & \textbf{0.938} & \textbf{0.903} & \textbf{0.925} & \textbf{0.929} & \textbf{0.937} & \textbf{10.44} & \textbf{10.64} & \textbf{11.25} & 21.71 \\
 & \textbf{\scriptsize $\pm$ 0.086} & \textbf{\scriptsize $\pm$ 0.083} & \textbf{\scriptsize $\pm$ 0.110} & \textbf{\scriptsize $\pm$ 0.012} & \textbf{\scriptsize $\pm$ 0.006} & \textbf{\scriptsize $\pm$ 0.031} & \textbf{\scriptsize $\pm$ 5.10} & \textbf{\scriptsize $\pm$ 5.38} & \textbf{\scriptsize $\pm$ 5.83} & \scriptsize $\pm$ 0.12 \\
\bottomrule
\end{tabular}
\caption{\textbf{MNIST.} Comparison of methods across TD, UD, and US settings. Arrows indicate whether higher ($\uparrow$) or lower ($\downarrow$) is better. Results are meaned over 5 seeds. Standard deviation is shown after the $\pm$ symbol.}
\end{table*}

\clearpage

\section{Ablations} \label{app:ablation}

\subsection{Number of Shots}

\begin{figure}[h]
    \centering
    \includegraphics[width=1.0\linewidth]{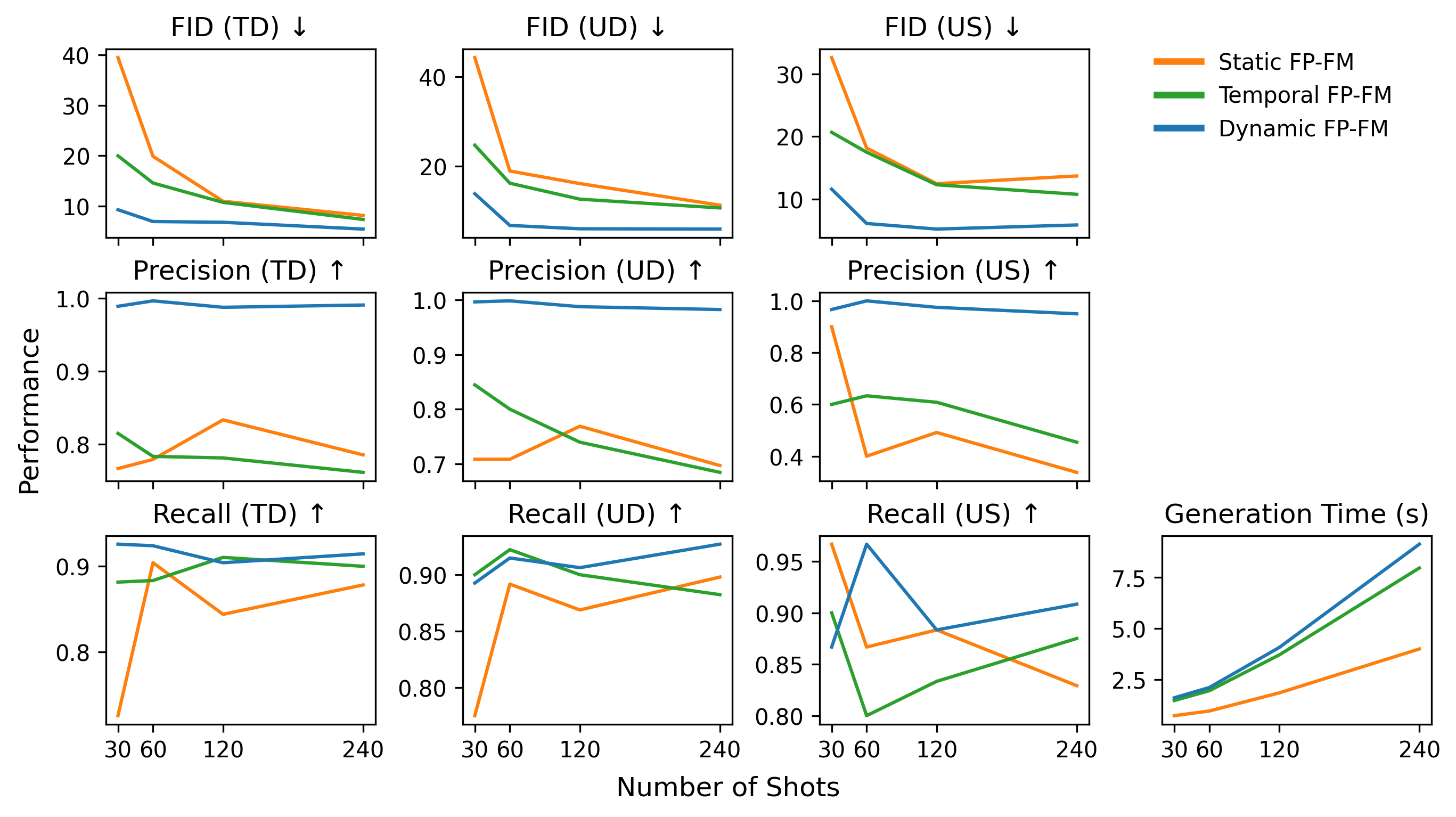}
    \caption{\textbf{Ablation on the Number of Shots.} We vary the number of shots both during training and evaluation for all three FP-FM baselines. Note that FID, precision, and recall are measured relative to the provided shots because, by assumption, this is all of the available data for the target distribution.}
    \label{fig:ablate_shots}
\end{figure}

We vary the number of shots provided during training and evaluation on the MNIST dataset. For the sake of compute, use 1 seed. We observe that for all three FP-FM variants, FID decreases as the number of shots increases. 
\dynamic has stable precision and recall as the number of shots increases. For \static and \temporal, precision may slightly decrease as the number of shots increases. 
This is likely a quirk of how precision is measured; 
As more shots are provided, the average distance between shots in feature space decreases, meaning that the union of epsilon balls approximating the manifold becomes shrinks, i.e., the epsilon becomes smaller (see \cite{precisionrecall}). 
Since precision measures how well generated data lies on this manifold, a tighter approximation of the manifold naturally suggests precision should decrease, all else held equal. 

\subsection{Number of Basis Functions}

\begin{figure}[h]
    \centering
    \includegraphics[width=1.0\linewidth]{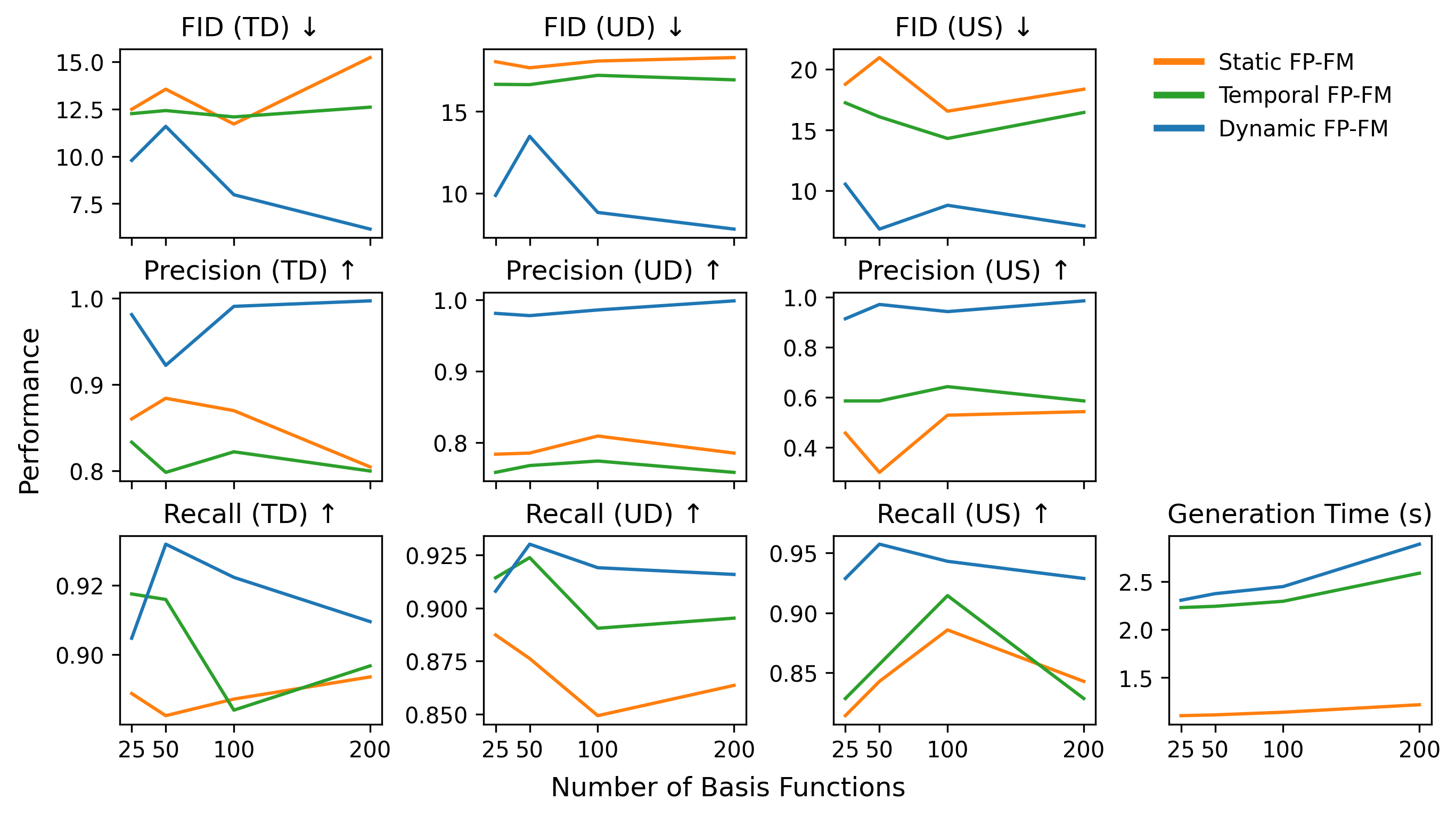}
     \caption{\textbf{Ablation on the Number of Basis Functions.} We vary the number of basis functions for all three FP-FM baselines on the MNIST dataset.}
    \label{fig:ablate_basis}
\end{figure}

We likewise ablate the number of basis functions on the MNIST dataset, over one seed.
We find that the none of \static, \temporal, or \dynamic are sensitive to the number of basis functions on this dataset, with only mild differences in performance. 
Generation time slightly increases as the number of basis functions increases because the cost of the least-squares solve increases. 
Since \dynamic requires the most least-squares solves, its generation time is the most affected of all three FP-FM variants, though the increase is still small.

\clearpage

\end{document}